\newtheorem{thm}{Theorem}
\newtheorem{lem}[thm]{Lemma}
\newtheorem{prop}[thm]{Proposition}
\newtheorem{observation}[thm]{Observation}
\newtheorem{defn}[thm]{Definition}
\newtheorem{assumption}{Assumption}
\newcommand{\pred}{\textsc{PRED}}
\newcommand{\clip}{\mbox{clip}}
\newcommand{\va}{\mathbf{a}}
\newcommand{\vF}{\mathbf{F}} 
\newcommand{\vh}{\mathbf{h}}
\newcommand{\vx}{\mathbf{x}}
\newcommand{\vb}{\mathbf{b}}
\newcommand{\vg}{\mathbf{g}} 
\newcommand{\vp}{\mathbf{p}}
\newcommand{\vS}{\mathbf{S}}
\newcommand{\vz}{\mathbf{z}}
\newcommand{\vzero}{\mathbf{0}}
\newcommand{\vone}{\mathbf{1}}
\DeclareMathOperator*{\argmin}{arg\,min}
\DeclareMathOperator{\sgn}{sgn}
\newcommand{\RR}{\mathbb{R}}      
\newcommand{\vnorm}[1]{\left\lVert#1\right\rVert} 
\newcommand{\abs}[1]{\left| #1 \right|}
\newcommand{\pderiv}[2]{\frac {\partial \left[ #1 \right]} {\partial #2}}
\newcommand{\cH}{\mathcal{H}}
\newcommand{\lrp}[1]{\left(#1\right)}
\newcommand{\lrb}[1]{\left[#1\right]}
\title{Learning to Abstain from Binary Prediction}
\author{
  Akshay Balsubramani \\
  Computer Science and Engineering\\
  University of California, San Diego\\
  \url{abalsubr@ucsd.edu}
}
\date{}
\begin{document}

\maketitle

\begin{abstract}
A binary classifier capable of abstaining from making a label prediction 
has two goals in tension: 
minimizing errors, and avoiding abstaining unnecessarily often. 
In this work, we exactly characterize the best achievable tradeoff between these two goals in a general semi-supervised setting, 
given an ensemble of predictors of varying competence as well as unlabeled data on which we wish to predict or abstain. 
We give an algorithm for learning a classifier in this setting which trades off its errors with abstentions in a minimax optimal manner, 
is as efficient as linear learning and prediction, and is demonstrably practical. 
Our analysis extends to a large class of loss functions and other scenarios, 
including ensembles comprised of \enquote{specialists} that can themselves abstain.
\end{abstract}


\section{Introduction}

Consider a general practice physician treating a patient with unusual or ambiguous symptoms. 
The general practitioner often does not have the capability to confidently diagnose such an ailment. 
The doctor is faced with a difficult choice: either \emph{commit} to a potentially erroneous diagnosis and act on it, which can have catastrophic consequences; 
or \emph{abstain} from any such diagnosis and refer the patient to a specialist or hospital instead, which is safer but will certainly cost extra time and resources.

Such a situation motivates the study of classifiers which are able not only to form a hypothesis about the
correct classification, but also abstain entirely from making a prediction. 
A sufficiently self-aware abstaining classifier might abstain on examples on which it is most unsure about the label, 
lowering the average prediction error it suffers when it does commit to a prediction. 
Like the doctor in the example, however, there is typically no use in abstaining on all data, so the amount of overall abstaining is somehow restricted. 
The classifier must allocate limited abstentions where they will most reduce error. 

There has been much historical work in decision theory and machine learning 
on learning such abstaining classifiers (e.g. \cite{C57, C70, HLS97, T00}),  
where the setting is often dubbed \enquote{classification with a reject option.}
The central focus of this work is to characterize a \emph{tradeoff}, between the abstain rate 
and the probability of error when committing to predict (the error rate). 
The optimal tradeoff is achieved by the set of classifiers which minimize error rate for a given abstain rate. 
Such classifiers are Pareto optimal, and the set of them is called the \emph{Pareto frontier} between abstain and error rates.

The tradeoff has previously been examined theoretically by learning classifiers that are overly conservative by design in deciding to abstain, 
in order to prove error guarantees. 
So the Pareto frontier between abstaining and erring has only been determined approximately or at a handful of points, 
despite a spate of recent work (\cite{BW08, EYW10, ZC14}; Sec. \ref{sec:relwork}). 

A primary contribution of this work is to describe the Pareto frontier \emph{completely} for a very general semi-supervised learning scenario, 
in which i.i.d. data are available in both labeled and unlabeled form. 
We build an aggregated abstaining classifier from a given ensemble of predictors, 
with the labeled data used to estimate their predictive power, 
and with access to a large unlabeled test dataset on which the ensemble's predictions are known. 

This entirely plausible situation adds only the abundant unlabeled data to the prototypical fully supervised learning setting, 
in which the goal is to learn a non-abstaining classifier. 
In the supervised setting and given only ensemble error rates, 
it has long been known that the best safe strategy is to predict according to the best single classifier -- that is, empirical risk minimization (ERM; \cite{V82}). 
But adding unlabeled data makes learning much easier. 
Recent work (\cite{BF15}, henceforth referred to as BF) derives the minimax optimal non-abstaining algorithm 
for the previously described semi-supervised scenario, 
and proves that it always performs at least as well as ERM. 

This paper generalizes the BF work 
by exactly specifying the Pareto optimal frontier when learning a classifier that can abstain in our semi-supervised setting.\footnote{To fix terminology, 
we learn a \emph{classifier} capable of two types of output on an input data point: 
\emph{abstaining}, and \emph{predicting} a label (that is, not abstaining). 
This is to avoid confusion, since there appear to be no standard names for these concepts in the literature. }
We also give an efficient method for learning the abstaining classifiers realizing this optimal tradeoff, 
which achieve the lowest possible guarantee on error rate for any given abstain rate. 
The resulting error guarantees as a function of abstain rate, and their dependence on the structure of the unlabeled ensemble predictions, 
are without precedent in the literature and are all unimprovable in this setting, by virtue of the minimax arguments we use to derive them. 

This paper builds towards such results in the initial sections, first giving the semi-supervised setup inherited from BF 
in Section \ref{sec:setup}. 
This is followed by an introduction to our techniques on a common model of abstaining: 
as a third possible outcome with a specified cost independent of the true labels (Section \ref{sec:abstcost}). 
We derive the abstaining classifier with the best worst-case loss guarantee among \emph{all possible} classifiers in our semi-supervised setting. 
It can be learned with a simple convex optimization, as scalable as a linear learning algorithm like the perceptron. 

In Section \ref{sec:abstconstr}, we establish the optimal tradeoff between abstaining and erring. 
To be specific, the Pareto frontier is a one-dimensional curve, parametrized by the allowed abstain rate $\alpha$ 
(Fig. \ref{fig:absterrtradeoff}, derived in Thm. \ref{thm:maintradeoff}). 
Given any $\alpha$, we specify the classifier with the lowest possible error bound among all classifiers with abstain rate $\alpha$, 
thereby giving all points on the optimal tradeoff curve. 
Our analysis shows that this Pareto frontier is identical whether abstaining is penalized explicitly with a known cost (as in Sec. \ref{sec:abstcost}) 
or is simply restricted to not occur too often overall (Sec. \ref{sec:abstconstr}), 
the two prevailing models of abstaining in the literature. 
Building on previous sections, we reach an efficiently learnable characterization of the Pareto optimal classifiers on the frontier. 
Proofs of all results in the paper are deferred to Appendix \ref{sec:apdxproofs}. 


\begin{figure}[t]
\centering
\label{fig:absterrtradeoff}
\includegraphics[height=1.9in, width=0.45\linewidth]{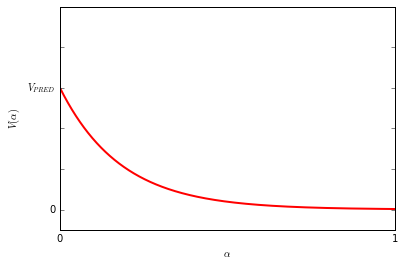}
\caption{Schematic of the abstain/error Pareto frontier in our setting (formal statement in Thm. \ref{thm:maintradeoff}): 
minimum achievable error $V (\alpha)$ for an aggregated classifier with allowed abstain rate $\leq \alpha$. }
%
\end{figure}

Having established the tradeoff and how to realize it optimally and efficiently, 
we discuss further extensions and related work in Sec. \ref{sec:discrelwork}. 
We then implement this paper's algorithms and empirically evaluate them against common abstaining baselines as a proof of concept (Sec. \ref{sec:experiments}), 
to illustrate some of the new algorithms' practical advantages in making scalable abstaining decisions with unlabeled data.

To show the versatility of the techniques in this paper, we also generalize them significantly
to a very large class of other loss functions 
(Appendix \ref{sec:general} for space reasons). 
This includes all convex surrogate losses used for ERM, as well as a multitude of non-convex losses. 
All the contributions of this paper, including the algorithms and characterization of the optimal tradeoff, extend to these losses as well.


\section{A Semi-Supervised Setting}
\label{sec:setup}

A natural starting point for our development is the non-abstaining scenario typically studied, 
which is a special case of our problem when the learned classifier is prevented from abstaining at all. 
So in this section, we briefly recapitulate the setup of BF 
for the non-abstaining case. 
That work prescribes how to do semi-supervised binary prediction without abstentions, in a strongly optimal sense, 
and further discusses their setting which we summarize here. 


In that no-abstention setting, 
we are given an ensemble of predictors $\cH = \{ h_1, \dots, h_p \}$ 
and i.i.d. unlabeled (``test") data $x_1, \dots, x_n$ on which predictions are evaluated, 
as well as i.i.d. labeled data from the same distribution. 
The ensemble's predictions on the unlabeled data are denoted by $\vF$:
\begin{align*}
\vF = 
 \begin{pmatrix}
   h_1(x_1) & h_1(x_2) & \cdots & h_1 (x_n) \\
   \vdots   & \vdots    & \ddots &  \vdots  \\
   h_p(x_1)  &  h_p (x_2)  & \cdots &  h_p (x_n)
 \end{pmatrix}
\end{align*}
Vector notation is used for the rows and columns of $\vF$: 
$\vh_i = (h_i (x_1), \cdots, h_i (x_n))^\top$ and $\vx_j =
(h_1 (x_j), \cdots, h_p (x_j))^\top$, 
so each data example can be considered as the $p$-vector of the ensemble's predictions on it. 
The test set has some binary labels 
which are allowed to be randomized, 
represented by values in $[-1,1]$ instead of just the two values $\{ -1, 1\}$. 
So it is convenient to write the labels on the test data as $\vz = (z_1; \dots; z_n) \in [-1,1]^n$. 
\footnote{For example, a value $z_i = \frac{1}{2}$ indicates $y_i = +1\;\text{w.p.}\; \frac{3}{4} $.} 

The main idea 
is to formulate the 
prediction problem as a
two-player zero-sum game between a predictor and an adversary.
In this game, the predictor is the first player, 
who plays $\vg = (g_1; g_2; \dots; g_n)$, 
a randomized label prediction $g_j \in [-1,1]$ for each example $\{x_j\}_{j=1}^{n}$. 
The adversary then sets the true labels $\vz \in [-1,1]^n$. 
So the expected classification error suffered by the predictor on example $j \in [n]$ is $\ell (z_j, g_j) := \frac{1}{2} (1 - z_j g_j)$. 

The predictor player does not know $\vz$ when they play, 
but does have some information about it through the ensemble performance on the labeled set. 
When any member of the ensemble $i$ is known to perform to a certain degree on the test data (it has a certain error rate), 
its predictions $\vh_i$ on the test data are a reasonable guide to $\vz$, 
and correspondingly give us information by constraining $\vz$. 
Each hypothesis in the ensemble contributes a constraint to an intersection of constraint sets which contains $\vz$. 

Accordingly, assume the predictor player has knowledge of a \emph{correlation vector}
$\vb \in (0, 1]^p$ such that $
\displaystyle \forall i \in [p] , \quad \frac{1}{n} \sum_{j=1}^n h_i (x_j) z_j \geq b_i 
$, 
i.e. $ \frac{1}{n} \vF \vz \geq \vb$. 
When the ensemble is comprised of binary classifiers, the $p$ inequalities represent upper bounds on individual classifier test error rates, 
which can be estimated w.h.p. from the labeled set 
with uniform convergence bounds as used by the standard supervised ERM procedure (\cite{BF15}).
\footnote{Two-sided bounds can also be dealt with readily, without changing the form of these or our results (see Sec. \ref{sec:discext}). 
We proceed with the one-sided inequality formulation in stating our theoretical results, following BF.}
So in this game-theoretic formulation, the adversary plays under ensemble error constraints defined by $\vb$.



The predictor's goal is to 
\emph{minimize the worst-case expected test error} 
(w.r.t. the randomized labeling $\vz$), written as  
$ \ell (\vz, \vg) := \frac{1}{n} \sum_{j=1}^{n} \ell (z_j, g_j) $.
This worst-case goal can be cast as the following optimization, a game:
\begin{align}
\label{eq:game1pred} 
V_{\pred} := \min_{\vg \in [-1,1]^n} \; \max_{\substack{ \vz \in [-1,1]^n , \\ \frac{1}{n} \vF \vz \geq \vb }} \; \ell (\vz, \vg) 
\end{align}

The predictor faces a learning problem here: 
finding an optimal strategy $\vg^*$ realizing the minimum in \eqref{eq:game1pred}. 
This strategy guarantees good worst-case performance on the unlabeled dataset w.r.t. any possible true labeling $\vz$, 
with an upper bound of $\ell (\vz, \vg^*) \leq V_{\pred}$ on the expected test error. 
The main result of the BF work describes this strategy, which realizes the minimax equilibrium of the game \eqref{eq:game1pred}.
\begin{thm}[\cite{BF15}]
\label{thm:gamesolngen}
Using a weight vector $\sigma \geq \vzero^p$ over $\cH$, 
the vector of unlabeled \emph{scores} is
$\vF^\top \sigma = (\vx_1^\top \sigma, \dots, \vx_n^\top \sigma)$, 
whose magnitudes are the \emph{margins}. 
Define the prediction \emph{potential well} for the 0-1 loss:
$ \Psi (m) := \max (\abs{m}, 1) $.
Also define the \emph{prediction slack function} 
$\displaystyle \gamma (\sigma) := - \vb^\top \sigma + \frac{1}{n} \sum_{j=1}^n \Psi ( \vx_{j}^\top \sigma )$. 
The value of the game \eqref{eq:game1pred} is 
$ V_{\pred} 
= \frac{1}{2} \min_{\sigma \geq \vzero^p} \gamma (\sigma) := \frac{1}{2} \gamma (\sigma^*) $. 
The minimax optimal predictions are defined as follows:
for all $j \in [n]$,
\begin{align}
\label{eq:gipredform}
g_j^* := g_j (\sigma^*) := 
\min \lrp{ 1, \max (\vx_{j}^\top \sigma^*, -1) }
\end{align}
\end{thm}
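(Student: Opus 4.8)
The plan is to treat \eqref{eq:game1pred} as a bilinear saddle-point problem and solve it by dualizing the adversary's constraints. First I would rewrite the objective in bilinear form: since $\ell(z_j,g_j)=\frac12(1-z_jg_j)$, the averaged loss is
\begin{align*}
\ell(\vz,\vg) = \frac12 - \frac{1}{2n}\vz^\top\vg,
\end{align*}
so \eqref{eq:game1pred} is $\min_{\vg}\max_{\vz}$ of a function that is linear in each argument separately, over the box $\vg\in[-1,1]^n$ and the polytope $\{\vz\in[-1,1]^n : \frac1n\vF\vz\ge\vb\}$. Both feasible sets are convex and compact, so Sion's minimax theorem (equivalently, strong LP duality for the inner problem) applies and lets me analyze the inner maximization through Lagrange multipliers.

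Next I would fix $\vg$ and dualize the constraint $\frac1n\vF\vz\ge\vb$ with a multiplier $\sigma\ge\vzero^p$. The inner objective becomes $\frac12-\vb^\top\sigma+\frac1n(\vF^\top\sigma-\tfrac12\vg)^\top\vz$, and the identity $\max_{\vz\in[-1,1]^n}\vc^\top\vz=\sum_j\abs{c_j}$ collapses the maximization over $\vz$ to
\begin{align*}
\frac12 - \vb^\top\sigma + \frac1n\sum_{j=1}^n\abs{\vx_j^\top\sigma - \tfrac12 g_j}.
\end{align*}
By strong duality the inner $\max_\vz$ equals the $\min_{\sigma\ge\vzero^p}$ of this expression, so $V_{\pred}$ becomes a joint minimization over $\vg\in[-1,1]^n$ and $\sigma\ge\vzero^p$; since both are minimizations I may freely interchange their order.

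It then remains to minimize over $\vg$ for fixed $\sigma$. This decouples across coordinates: each term $\min_{g_j\in[-1,1]}\abs{\vx_j^\top\sigma-\tfrac12 g_j}$ is minimized by matching $\tfrac12 g_j$ to the score $\vx_j^\top\sigma$ as closely as the box allows, giving minimizer $g_j=\clip(2\vx_j^\top\sigma)$ and residual $\max(\abs{\vx_j^\top\sigma}-\tfrac12,0)$. Reparametrizing the multiplier by replacing $\sigma$ with $\sigma/2$ turns the minimizer into $g_j^*=\clip(\vx_j^\top\sigma^*)$, matching \eqref{eq:gipredform}, and the identity $\max(\abs m-1,0)=\Psi(m)-1$ shows the resulting $(-1)$ constants cancel the leading $+1$, leaving exactly $V_{\pred}=\frac12\min_{\sigma\ge\vzero^p}\gamma(\sigma)$.

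The main obstacle is the rigorous justification of the min--max interchange: strong duality (or Sion) requires the adversary's feasible set to be nonempty and bounded, which I would secure by noting that the true labeling satisfies the correlation constraints, so $\{\vz:\frac1n\vF\vz\ge\vb\}$ is nonempty (and is automatically bounded inside the box). A secondary point to handle carefully is confirming that the $\vg^*$ read off from the joint minimization is genuinely worst-case optimal, i.e. that $\max_\vz\ell(\vz,\vg^*)=V_{\pred}$; this follows from the zero duality gap identifying $(\vg^*,\sigma^*)$ as a saddle point. The factor-of-two reparametrization is routine but easy to mishandle, and is exactly what reconciles the clipping threshold with the unit-margin potential well $\Psi$.
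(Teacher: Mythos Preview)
Your proposal is correct and follows essentially the same route as the paper. The paper packages your Lagrange-dualization step as Lemma~\ref{lem:gamegeng} (applied with $\va=-\vg$) and then minimizes coordinatewise over $g_j$ exactly as you do; the only cosmetic difference is that the paper factors out the $\tfrac12$ from $\ell(z_j,g_j)=\tfrac12(1-z_jg_j)$ before dualizing, so the dual variable is your $\sigma/2$ from the start and no final reparametrization is needed.
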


The potential well encodes the contribution of each unlabeled example to the worst-case error. 
Intuitively, low-margin examples are actually favored to ensure generalization to the test data, 
which will not occur if the algorithm commits overaggressively to its predictions, as for high-margin unlabeled examples. 


A similar argument to that of Thm. \ref{thm:gamesolngen} gives a ``weak duality" observation. 
\begin{observation}[\cite{BF15}]
\label{obs:slacksubopt}
For any $\sigma_0 \geq \vzero^p$, 
the worst-case loss after playing $\vg (\sigma_0)$ is bounded by 
$$ \max_{\substack{ \vz \in [-1,1]^n , \\ \frac{1}{n} \vF \vz \geq \vb }} \; \ell (\vz, \vg (\sigma_0) )
\leq \frac{1}{2} \gamma (\sigma_0) $$
\end{observation}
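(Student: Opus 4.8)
The plan is to prove the bound \emph{pointwise}: I will show that for every labeling $\vz \in [-1,1]^n$ with $\frac{1}{n}\vF\vz \geq \vb$, the loss $\ell(\vz, \vg(\sigma_0))$ is already at most $\frac{1}{2}\gamma(\sigma_0)$, so that taking the maximum over feasible $\vz$ yields the claim at once. The first move is to linearize the objective. Since $\ell(z_j, g_j) = \frac{1}{2}(1 - z_j g_j)$, averaging gives $\ell(\vz, \vg) = \frac{1}{2} - \frac{1}{2n}\vz^\top \vg$, so controlling the worst-case loss is exactly a matter of lower-bounding $\frac{1}{n}\vz^\top \vg(\sigma_0)$.

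The second step is the weak-duality trick that uses $\sigma_0$ as a dual certificate against the correlation constraint. Because $\sigma_0 \geq \vzero^p$ and $\frac{1}{n}\vF\vz \geq \vb$ hold componentwise, we may take the inner product with $\sigma_0$ to get $\frac{1}{n}\sigma_0^\top \vF\vz \geq \vb^\top \sigma_0$. Writing $s_j := \vx_j^\top \sigma_0$ for the score on example $j$, and noting $\sigma_0^\top \vF\vz = \sum_{j} s_j z_j$, this reads $\frac{1}{n}\sum_{j} s_j z_j \geq \vb^\top \sigma_0$. This is precisely where the $-\vb^\top\sigma_0$ term of the slack function $\gamma$ will enter.

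The third step connects the scores $s_j$ to the actual clipped predictions $g_j(\sigma_0) = \clip(s_j, -1, 1)$. I split $\frac{1}{n}\sum_{j} z_j g_j = \frac{1}{n}\sum_{j} z_j s_j - \frac{1}{n}\sum_{j} z_j(s_j - g_j)$ and bound the gap using $|z_j| \leq 1$, giving $z_j(s_j - g_j) \leq |s_j - g_j| = \max(|s_j| - 1, 0)$ by a direct case-check on the clip. Combining with the previous step yields $\frac{1}{n}\sum_{j} z_j g_j \geq \vb^\top \sigma_0 - \frac{1}{n}\sum_{j} \max(|s_j| - 1, 0)$. The finishing move is the potential-well identity $\Psi(s_j) = \max(|s_j|, 1) = 1 + \max(|s_j| - 1, 0)$, which rewrites $\frac{1}{n}\sum_{j}\max(|s_j|-1,0) = \frac{1}{n}\sum_{j}\Psi(s_j) - 1$; substituting this into the linearized loss, the additive $\frac{1}{2}$ and $-\frac{1}{2}$ cancel and everything collapses exactly to $\ell(\vz, \vg(\sigma_0)) \leq \frac{1}{2}\bigl(-\vb^\top\sigma_0 + \frac{1}{n}\sum_{j}\Psi(s_j)\bigr) = \frac{1}{2}\gamma(\sigma_0)$.

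I expect no serious obstacle. The only delicate points are the per-example computation $|s_j - g_j| = \max(|s_j| - 1, 0)$ and the matching identity for $\Psi$, both of which are routine case analyses on whether $|s_j|$ lies inside or outside $[-1,1]$. The conceptual content is simply that $\sigma_0$ acts as a dual certificate -- the very object optimized in Thm.~\ref{thm:gamesolngen} -- so this observation is exactly the weak-duality direction of that minimax theorem, now asserted for an arbitrary feasible $\sigma_0$ rather than only the minimizer $\sigma^*$.
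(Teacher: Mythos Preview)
Your proof is correct. The paper does not spell out a proof of this observation (it is cited from \cite{BF15} and described only as following ``a similar argument to that of Thm.~\ref{thm:gamesolngen}'' via weak duality), and your direct argument---using $\sigma_0$ as a dual certificate against the constraint $\frac{1}{n}\vF\vz \geq \vb$ and then absorbing the clipping gap $|s_j - g_j(\sigma_0)| = [\,|s_j|-1\,]_+ = \Psi(s_j) - 1$---is exactly the weak-duality half of the minimax argument the paper alludes to.
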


All this prescribes how to aggregate the given ensemble predictions $\vF$ on the test set: 
\textbf{learn} by minimizing the slack function $\gamma (\sigma)$, finding the weights $\sigma^*$ that achieve $V$, 
and \textbf{predict} with $g_j (\sigma^*)$ on any test example, as indicated in \eqref{eq:gipredform}.


In practice, $\gamma (\sigma)$ is minimized approximately in our statistical learning setting. 
If $\sigma \approx \sigma^*$, then the prediction vector $\vg (\sigma)$ suffers worst-case error $\approx V$ by Obs. \ref{obs:slacksubopt}. 
Since $\gamma$ is 1-Lipschitz and convex and depends on $\vF$ only through the average potential of unlabeled examples, 
such an approximate solution $\approx \sigma^*$ can be found efficiently 
by stochastic optimization methods like minibatch SGD (as in \cite{BF15, BF15b}).



\section{Abstaining with a Fixed Cost}
\label{sec:abstcost}

For the rest of this paper, we build on the minimax setup of Section \ref{sec:setup}, 
extending it to learning an abstaining classifier. 
We first specify how to model the decision-making process of the abstaining classifier we learn. 
The classifier is allowed to abstain on each test example $j \in [n]$ 
with some probability $1 - p_j$, predicting $g_j$ with the remaining probability $p_j \in [0,1]$. 
Therefore, the classifier now plays two vectors: the probabilities of prediction $\vp \in [0,1]^n$, and the predictions $\vg \in [-1,1]^n$.
The adversary again plays the true labels $\vz$, constrained as in Sec. \ref{sec:setup}.

The objective function must reflect the tradeoff between abstaining and erring. 
In the literature, this is often modeled by assessing a pre-specified cost $c$ for an abstention, 
somewhere between the loss of predicting correctly and guessing randomly. 
This induces the algorithm to be self-aware about abstaining -- it abstains whenever it would expect to incur a higher loss, $> c$, 
by predicting against the adversary, 
so that $c$ can be taken to be the largest conditional error probability that is considered tolerable (\cite{C70}). 

Formally, given a value of $c \geq 0$, the game proceeds as follows: 
\begin{enumerate}
\item
Algorithm plays probabilities of prediction $\vp \in [0,1]^n$, and predictions $\vg \in [-1,1]^n$. 
\item
Adversary plays true labels $\vz \in [-1,1]^n$ such that  $\frac{1}{n} \vF \vz \geq \vb$. 
\item
Alg. suffers expected abstaining loss $\ell_{c} (z_j, g_j) := p_j \ell (z_j, g_j) + (1 - p_j) c$ on each test example. 
\end{enumerate}

The goal is now to minimize the worst-case expected \emph{abstaining} loss $\ell_{c}$ on the test data: 
\begin{align}
\label{eq:abstlnrinzcost}
V_{c} &:= \min_{\vg \in [-1,1]^n} \; \min_{\substack{ \vp \in [0,1]^n }} \; \max_{\substack{ \vz \in [-1,1]^n , \\ \frac{1}{n} \vF \vz \geq \vb }} \; 
\frac{1}{n} \sum_{j=1}^{n} \ell_{c} (z_j, g_j)
\end{align}

Our goal is now to find the optimal $\vg_c^* , \vp_c^*$ which minimize \eqref{eq:abstlnrinzcost}. 
These can again be described by defining a potential function. 

\begin{defn}
Define the \textbf{abstaining potential} well given abstaining cost $c$:
\begin{align}
\label{eq:defofabstpot}
\Psi (m, c) = 
\begin{cases}
\abs{m} + 2c (1 - \abs{m}) \quad & \qquad \abs{m} \leq 1 \\
\abs{m} \quad & \qquad \abs{m} > 1
\end{cases}
\end{align}
\end{defn}
Note that this generalizes the potential well of Sec. \ref{sec:setup}, 
which we continue to write as a univariate function $\Psi$, so that $\Psi (m) = \Psi (m, \frac{1}{2})$. 
Abstaining potential wells are plotted as a function of $m$ for different values of $c$ in the left of Fig. \ref{fig:abstpot}. 
As $c$ decreases with all else held equal, the potential's shape changes so that more examples tend to have lower-magnitude margins 
at the optimum, and so they get abstained upon more.

\begin{figure}
 \begin{minipage}[t]{.48\linewidth}
 \vspace{0pt}
\includegraphics[width=\textwidth]{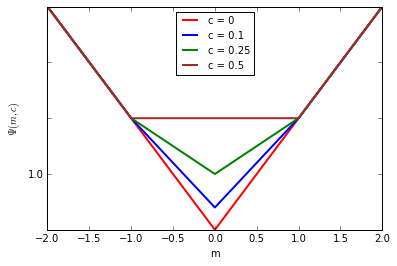}
 \end{minipage}
  \begin{minipage}[t]{.48\linewidth}
     \vspace{0pt}
     \centering
     \includegraphics[width=\textwidth]{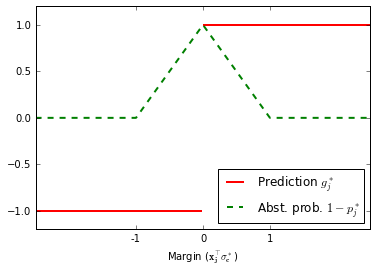}
    \end{minipage}
\caption{\small 
Left: $\Psi (m, c)$ for various $c$. 
Right: optimal predictions $g_j^*$ and abstain probabilities $1 - p_j^*$, 
as a function of score $\vx^\top \sigma_c^*$. 
}
\label{fig:abstpot}
\end{figure}

\begin{thm}
\label{thm:abstcostsoln}
Define the \emph{abstaining slack function} 
$\displaystyle \gamma (\sigma, c) := - \vb^\top \sigma + \frac{1}{n} \sum_{j=1}^{n} \Psi \lrp{ \vx_{j}^\top \sigma, c}$. 
The minimax value of the game \eqref{eq:abstlnrinzcost} 
for any positive cost $c \leq \frac{1}{2}$ is\footnote{The proof (in Appendix \ref{sec:apdxproofs}) shows that if $c > \frac{1}{2}$, the learning problem reduces to the no-abstaining case.}
$\displaystyle
V_{c} = \frac{1}{2} \min_{\sigma \geq \vzero^p} \gamma (\sigma, c)
$. 
If $\sigma_{c}^* \geq \vzero^p$ is the minimizing weight vector in this optimization, 
the minimax optimal predictions $\vg_c^*$ and prediction probabilities $\vp_c^*$ can be defined for each unlabeled example $j \in [n]$ as: 
\begin{align*}
p_{c,j}^* = \min \lrp{ 1, \abs{\vx_j^\top \sigma_{c}^*} }
\qquad , \qquad
g_{c,j}^* = \sgn \lrp{ \vx_j^\top \sigma_{c}^* }
\end{align*}
\end{thm}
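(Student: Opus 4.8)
The plan is to mirror the proof of Theorem~\ref{thm:gamesolngen}: expand the abstaining objective, peel off the terms independent of the adversary's play, and dualize the remaining linear maximization over $\vz$ to expose a single outer minimization over a nonnegative weight vector $\sigma$. Writing $\ell_{c}(z_j,g_j) = \frac{p_j}{2}(1 - z_j g_j) + (1-p_j)c$, the objective splits into a $\vz$-free part $\frac{1}{n}\sum_j[\frac{p_j}{2} + (1-p_j)c]$ plus the bilinear term $-\frac{1}{2n}\sum_j p_j g_j z_j$. For each fixed $(\vg,\vp)$ the inner problem $\max_{\vz}\,(-\frac{1}{2n}\sum_j p_j g_j z_j)$ over the polytope $\{\vz \in [-1,1]^n : \frac{1}{n}\vF\vz \geq \vb\}$ is a linear program, so I would invoke LP strong duality (the constraint set is bounded, and feasible by the same assumption inherited from BF that the true labeling lies in it) to rewrite it as $\min_{\sigma \geq \vzero^p}[-\vb^\top\sigma + \frac{1}{n}\sum_j |\vx_j^\top\sigma - \frac{p_j g_j}{2}|]$, exactly the relationship underlying Observation~\ref{obs:slacksubopt}.

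Once duality is applied, the value is an all-minimization problem in $(\sigma,\vg,\vp)$, so I would freely interchange the orders and push the minimization over $(g_j,p_j)$ inside the per-example sum. The crux is then the scalar computation
\[
\phi_j(\sigma) = \min_{p\in[0,1],\,g\in[-1,1]}\Bigl[\tfrac{p}{2} + (1-p)c + \bigl|s - \tfrac{pg}{2}\bigr|\Bigr], \qquad s := \vx_j^\top\sigma.
\]
I would first minimize over $g$, noting $\frac{pg}{2}$ ranges over $[-\frac{p}{2},\frac{p}{2}]$, so $\min_g|s - \frac{pg}{2}| = (|s| - \frac{p}{2})^+$ with the minimizing $g$ aligned to $\sgn(s)$ in the binding regime; then minimize the resulting piecewise-linear function of $p$. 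This is where the hypothesis $c \leq \frac{1}{2}$ enters decisively: on the branch $p > 2|s|$ the objective equals $c + p(\frac{1}{2} - c)$, whose monotonicity in $p$ flips sign exactly at $c = \frac{1}{2}$ — which is also why $c > \frac{1}{2}$ collapses to never abstaining (the footnote). A short case split on whether $|s| < \frac{1}{2}$ yields $\phi_j(\sigma) = \frac{1}{2}\Psi(2s,c)$ with minimizing $p = \min(1,2|s|)$ and $g = \sgn(s)$. A final rescaling $\sigma \mapsto 2\sigma$ absorbs the factor of two, turning $\min_\sigma[-\vb^\top\sigma + \frac{1}{n}\sum_j\frac{1}{2}\Psi(2\vx_j^\top\sigma,c)]$ into $\frac{1}{2}\min_\sigma\gamma(\sigma,c)$ and converting the per-example minimizers into $p_{c,j}^* = \min(1,|\vx_j^\top\sigma_{c}^*|)$ and $g_{c,j}^* = \sgn(\vx_j^\top\sigma_{c}^*)$.

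I expect the main obstacle to be bookkeeping rather than conceptual depth: carefully tracking the factor-of-two rescaling of $\sigma$ so the stated $\gamma(\sigma,c)$ matches the dual that naturally arises, and handling the boundary cases in the per-example optimization where the minimizing $g$ and $p$ are non-unique (e.g.\ $|s| = \frac{1}{2}$, or $s = 0$ where abstaining is certain). I would also confirm the duality step is airtight: that the adversary's polytope is nonempty so the LP value is finite, and that $\gamma(\cdot,c)$ is coercive enough for $\sigma_{c}^*$ to be attained — both following from the feasibility assumption already used by BF, since $\Psi(m,c) \geq |m|$ ensures $\gamma(\cdot,c)$ dominates the no-abstain slack function of Theorem~\ref{thm:gamesolngen}.
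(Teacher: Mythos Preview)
Your proposal is correct and follows essentially the same route as the paper's proof: dualize the inner linear maximization over $\vz$ via Lagrange/LP duality (the paper's Lemma~\ref{lem:gamegeng}), then carry out the per-example minimization over $g_j$ first and $p_j$ second. The only cosmetic difference is that the paper pulls the global factor $\tfrac{1}{2}$ out of $\ell(z_j,g_j)=\tfrac{1}{2}(1-z_jg_j)$ \emph{before} dualizing, so the dual term is $\lvert \vx_j^\top\sigma - p_j g_j\rvert$ rather than $\lvert \vx_j^\top\sigma - \tfrac{p_j g_j}{2}\rvert$, the per-example computation yields $\Psi(\vx_j^\top\sigma,c)$ directly, and your final rescaling $\sigma\mapsto 2\sigma$ is never needed.
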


Analogous to Section \ref{sec:setup}, we have reduced the learning problem to a convex optimization of a well-behaved function, 
and the optimal prediction on each test example depends only on that example's learned score.
Just as before, this means that the cost-sensitive semi-supervised abstain scenario can be optimally solved efficiently. 

It is easy to see that $\Psi (m, c)$ is increasing in $c$ for any $m$, so for any $c \leq \frac{1}{2}$, 
\begin{align*}
V_{c} 
&= \frac{1}{2} \min_{\sigma \geq \vzero^p} \gamma (\sigma, c) 
\leq \frac{1}{2} \min_{\sigma \geq \vzero^p} \gamma (\sigma) = V_{\pred}
\end{align*}
which shows that allowing abstaining always helps in the worst case; 
its benefit can be quantified in a way that depends intimately on the distribution of ensemble predictions on the (unlabeled) data.

Similarly to Sec. \ref{sec:setup}, the key quantity which encodes the interdependencies of the ensemble on unlabeled data is the weight vector $\sigma_c^*$, 
which is in general different from the no-abstaining solution $\sigma_{c = 1/2}^* = \sigma^*$. 
The optimal strategies for the abstaining classifier are simple, and are independent of $c$ given the $\sigma_c^*$-weighted margin -- 
in particular, after abstaining it is no longer optimal to ``hedge," or randomize, the predictions at all. 

Instead, the prediction $\vg_c^*$ is just a $\sigma_c^*$-weighted majority vote, 
while the optimal classifier abstains with nonzero probability on exactly the examples that it would hedge in the non-abstain case (those with margin $\leq 1$). 
Observe that Thm. \ref{thm:abstcostsoln} for $c = \frac{1}{2}$ reduces to the non-abstaining algorithm of Section \ref{sec:setup} 
despite the apparent difference in $\vg^*$, 
since abstaining in this case is equivalent to predicting one of the two labels uniformly at random. 
At the other extreme of $c = 0$, it can easily be verified that the potential well is minimized by abstaining on all examples, 
so that they are all at the bottom of the well with zero margin.

To summarize, we have established a clean minimax solution to the problem of abstaining with a specified cost. 
To our knowledge, previous work in the fixed-cost model considered a supervised setting with no unlabeled data, 
so we believe this is the first principled semi-supervised abstaining classifier in this model.


\section{Predicting with Constrained Abstain Rate}
\label{sec:abstconstr}

However, the fixed-cost model can still be an unsatisfactory way to study the central abstain-error tradeoff, 
because there may be no clear way to choose $c$.
In this section, we directly study the central tradeoff between abstain rate and error rate without positing an explicitly known cost $c$, 
and derive Fig. \ref{fig:absterrtradeoff}. 

As shown in Fig. \ref{fig:absterrtradeoff}, classifiers on the Pareto frontier minimize their error rate under a given constraint on abstain rate. 
So the algorithm is now given a constant $\alpha \in [0,1]$; 
we simply restrict our abstain rate to be $\leq \alpha$, and derive the minimax optimal error-minimizing strategy 
among all predictors with abstain rate thus restricted. 
The protocol of the game against the adversary is now a slight alteration of Sec. \ref{sec:abstcost}'s game: 
\begin{enumerate}
\item
Algorithm plays probabilities of prediction $\vp \in [0,1]^n$ such that $\frac{1}{n} \vone^\top \vp \geq 1 - \alpha$, and predictions $\vg \in [-1,1]^n$. 
\item
Adversary plays true labels $\vz \in [-1,1]^n$ such that  $\frac{1}{n} \vF \vz \geq \vb$. 
\item
Algorithm suffers expected error $p_j \ell (z_j, g_j)$ on each test example. 
\end{enumerate}

This deals directly with the abstain-error tradeoff described in the introduction. 
Raising $\alpha$ always lowers the classifier's error rate in this scenario, 
because the examples on which it abstains by default contribute zero to the expected error, so abstaining can only lower the error contribution relative to predicting. 

As in the previous section, our goal is to 
minimize the worst-case expected error when predicting on the test data 
(w.r.t. the randomized labeling $\vz$), which we write 
$ \ell_{\vp} (\vz, \vg) := \frac{1}{n} \sum_{j=1}^{n} p_j \ell (z_j, g_j) := \frac{1}{2n} \sum_{j=1}^{n} p_j (1 - z_j g_j) $. 
We can again write our worst-case prediction scenario as a zero-sum game:
\begin{align}
\label{eq:costeqsolnzo} 
&V (\alpha) 
:= \min_{\vg \in [-1,1]^n} \min_{\substack{ \vp \in [0,1]^n , \\ \frac{1}{n} \vone^\top \vp \geq 1 - \alpha }} \; \max_{\substack{ \vz \in [-1,1]^n , \\ \frac{1}{n} \vF \vz \geq \vb }} \; \ell_{\vp} (\vz, \vg) \nonumber \\
&= \min_{\substack{ \vg \in [-1,1]^n , \\ \vp \in [0,1]^n , \\ \frac{1}{n} \vone^\top \vp \geq 1 - \alpha }} \; \max_{\substack{ \vz \in [-1,1]^n , \\ \frac{1}{n} \vF \vz \geq \vb }} \; \frac{1}{2n} \sum_{j=1}^{n} p_j (1 - z_j g_j) 
\end{align}

Next, we exactly compute the optimal $\vg^*$ (and $V (\alpha)$) given $\alpha$, 
and derive an efficient algorithm for learning them. 


\subsection{Solving the Game}

The solution to the game \eqref{eq:costeqsolnzo} depends on the optimum of an appropriate abstaining potential as before.

\begin{thm}
\label{thm:gamesolnabstzo}
Given some $\alpha \in (0,1)$, the minimax value of the game \eqref{eq:costeqsolnzo} is $V (\alpha) = $
\begin{align}
\label{eq:constrabstsolnzo}
\frac{1}{2} \max_{ \lambda \geq 0 } \lrb{ \min_{\sigma \geq \vzero^p} \gamma \lrp{ \sigma, \frac{\lambda}{2}} - \lambda \alpha }
\end{align} 
If $\sigma^*_{\alpha} \geq \vzero^p$ is the minimizing weight vector in this optimization, 
the minimax optimal predictions $\vg^*$ and prediction probabilities $\vp^*$ can be defined for each example $j \in [n]$ in the test set. 
\begin{align*}
p_{j}^* = \min \lrp{ 1, \abs{\vx_j^\top \sigma^*_{\alpha}} }
\qquad , \qquad
g_{j}^* = \sgn \lrp{ \vx_j^\top \sigma^*_{\alpha} }
\end{align*}
\end{thm}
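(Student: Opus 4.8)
The plan is to dispatch the abstain-rate constraint $\frac{1}{n}\vone^\top\vp \geq 1-\alpha$ by Lagrangian duality, which collapses the game \eqref{eq:costeqsolnzo} onto the fixed-cost game already solved in Thm.~\ref{thm:abstcostsoln}. I would introduce a multiplier $\lambda \geq 0$ (scaled) for this single linear constraint and form the Lagrangian $L = \ell_{\vp}(\vz,\vg) + \frac{\lambda}{2}\lrp{(1-\alpha) - \frac{1}{n}\vone^\top\vp}$, aiming to show $V(\alpha) = \max_{\lambda \geq 0}\min_{\vg,\vp}\max_{\vz} L$. The crucial algebraic observation is that, per example, penalizing the budget is identical to charging a fixed abstaining cost: rearranging gives $\frac{p_j}{2}(1-z_jg_j) + \frac{\lambda}{2}(1-p_j) = \ell_{\lambda/2}(z_j,g_j)$, so that $L = \frac{1}{n}\sum_j \ell_{\lambda/2}(z_j,g_j) - \frac{\lambda\alpha}{2}$. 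Hence the inner $\min_{\vg,\vp}\max_{\vz} L$ is exactly the cost-$\frac{\lambda}{2}$ abstaining game of Sec.~\ref{sec:abstcost}, shifted by the constant $-\frac{\lambda\alpha}{2}$, and Thm.~\ref{thm:abstcostsoln} evaluates it as $\frac{1}{2}\min_{\sigma \geq \vzero^p}\gamma\lrp{\sigma, \frac{\lambda}{2}} - \frac{\lambda\alpha}{2}$. Maximizing over $\lambda \geq 0$ then yields \eqref{eq:constrabstsolnzo}.

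Two things must be justified for the outer identity to hold. First, strong duality: I would substitute $q_j := p_jg_j$, under which $\ell_{\vp}(\vz,\vg) = \frac{1}{2n}\sum_j (p_j - z_jq_j)$ becomes linear (hence convex) in the decision variables $(\vp,\vq)$ ranging over the compact convex set $\{\abs{q_j} \leq p_j \leq 1\}$, and linear in the adversary's $\vz$ over its compact convex constraint set. Sion's minimax theorem then legitimizes the inner min-max swap, and since $\alpha \in (0,1)$ makes $\vp = \vone$ strictly feasible, Slater's condition gives a zero duality gap for the single inequality constraint, validating the interchange of $\max_{\lambda \geq 0}$ with $\min_{\vg,\vp}\max_{\vz}$. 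Second, I must invoke Thm.~\ref{thm:abstcostsoln}, whose value formula is stated only for cost $\frac{\lambda}{2} \leq \frac{1}{2}$, so I need the outer maximum to be attained at some $\lambda^* \leq 1$.

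For the optimal strategies, let $\lambda^*$ attain the outer maximum and $\sigma^*_\alpha := \sigma^*_{\lambda^*/2}$ the inner minimum. Because the additive constant $-\frac{\lambda^*\alpha}{2}$ does not move the argmin, the optimizers of the cost-$\frac{\lambda^*}{2}$ game are also optimal for the Lagrangian, and Thm.~\ref{thm:abstcostsoln} supplies exactly $p_j^* = \min\lrp{1, \abs{\vx_j^\top\sigma^*_\alpha}}$ and $g_j^* = \sgn\lrp{\vx_j^\top\sigma^*_\alpha}$. Complementary slackness for the budget constraint forces the abstain rate $\frac{1}{n}\sum_j (1-p_j^*)$ to equal $\alpha$ whenever $\lambda^* > 0$, which both pins down the point on the frontier and, via the envelope identity $\frac{d}{d\lambda}\min_\sigma \gamma\lrp{\sigma,\frac{\lambda}{2}} = \frac{1}{n}\sum_j(1-p_j^*)$, is what characterizes $\lambda^*$.

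I expect the main obstacle to be the delicate behavior at the boundary $\frac{\lambda}{2} = \frac{1}{2}$. There the fixed-cost value formula of Thm.~\ref{thm:abstcostsoln} transitions: for $\frac{\lambda}{2} > \frac{1}{2}$ the fixed-cost game degenerates to never abstaining (value $V_{\pred}$), whereas the expression $\frac{1}{2}\min_\sigma \gamma\lrp{\sigma,\frac{\lambda}{2}}$ keeps growing. To close the proof I must confirm that the maximand in \eqref{eq:constrabstsolnzo} is non-increasing once $\lambda > 1$, so the outer maximum indeed lands in the regime $\lambda^* \in [0,1]$ where Thm.~\ref{thm:abstcostsoln} applies. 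The fact enabling this is that at cost $c = \frac{1}{2}$ abstaining and predicting uniformly at random incur the same loss, so any slack between the allowed budget $\alpha$ and the $c = \frac{1}{2}$ abstain rate can be absorbed by re-labeling hedged examples as (random) predictions rather than by pushing $\lambda$ past $1$; making this interchange rigorous, together with the strong-duality interchange above, is where the real work lies.
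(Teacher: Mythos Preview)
Your approach is correct and shares the paper's core idea---introducing a Lagrange multiplier $\lambda$ for the abstain-rate constraint and reducing to the fixed-cost potential $\gamma(\sigma,\lambda/2)$---but the order of operations differs in a way worth noting. The paper first dualizes the adversary's constraints on $\vz$ (via Lemma~\ref{lem:gamegeng}) and minimizes over $g_j$ exactly as in the proof of Thm.~\ref{thm:abstcostsoln}, and only \emph{then} introduces $\lambda$; at that stage the objective $p_j + [|\vx_j^\top\sigma|-p_j]_+$ is manifestly convex in $p_j$, so a plain minimax swap between $\min_{\vp}$ and $\max_\lambda$ suffices, followed by an explicit minimization over $p_j$. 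You instead Lagrangianize at the outermost level and invoke Thm.~\ref{thm:abstcostsoln} as a black box, which is more modular but forces you to justify the harder interchange of $\max_\lambda$ with the full $\min_{\vg,\vp}\max_{\vz}$; your $q_j = p_j g_j$ substitution to recover joint convexity is exactly the right device for this, and is not needed in the paper's ordering. Regarding your worry about $\lambda^* \leq 1$: the paper's proof simply asserts the $p_j^*$ formula ``for any $\lambda \geq 0$'' and states the theorem with an unrestricted $\max_{\lambda \geq 0}$, so you are being more scrupulous than the paper here; your proposed resolution (that for $\alpha > 0$ the maximand is non-increasing past $\lambda = 1$, since there abstaining and random prediction coincide) is sound and completes the argument.
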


The learning problem of Thm. \ref{thm:gamesolnabstzo} closely resembles that of Section \ref{sec:abstcost}, 
with $\lambda$ replacing $2c$ in the same solution path as $c$ and $\alpha$ are varied. 
This close correspondence is perhaps not surprising when viewed as a manifestation of Lagrange duality; the cost $c$ is dual to the abstain rate $\alpha$. 
However, it appears to be a new contribution to the abstaining literature. 

For the rest of this subsection, we discuss the structure of the solution in Thm. \ref{thm:gamesolnabstzo}. 
Define the maximizing value of $\lambda$ in \eqref{eq:constrabstsolnzo}, written as a function of the abstain rate, to be $\lambda^* (\alpha)$. 
Also write the minimizing $\sigma$ as a function of $\lambda$ as $\sigma^* (\lambda) = \argmin_{\sigma \geq \vzero^p} \gamma (\sigma, \lambda)$. 
Now 
\begin{align*}
\pderiv{\gamma (\sigma, \lambda) }{\lambda}
&= \frac{1}{n} \sum_{j=1}^{n} \pderiv{\Psi \lrp{ \vx_{j}^\top \sigma, \frac{\lambda}{2}} }{\lambda} 
= \frac{1}{n} \sum_{j=1}^{n} \lrb{1 - \abs{\vx_{j}^\top \sigma}}_{+}
\end{align*}
(where $\lrb{m}_{+} := \max (0, m)$), 
so $\gamma (\sigma, \lambda)$ is linear in $\lambda$.  
Therefore, the maximand over $\lambda$ in \eqref{eq:constrabstsolnzo} is concave, because it is a minimum of linear functions: 

\begin{prop}
\label{prop:propsofw}
$w (\lambda) := \displaystyle \min_{\sigma \geq \vzero^p} \gamma (\sigma, \lambda) $ is concave.
\end{prop}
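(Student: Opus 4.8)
The plan is to recognize $w$ as a pointwise infimum of affine functions of $\lambda$ and then invoke the elementary fact that such an infimum is concave. The real content is already contained in the derivative computation preceding the statement, so almost no new work is needed.

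First I would fix an arbitrary $\sigma \geq \vzero^p$ and observe that $\lambda \mapsto \gamma(\sigma, \lambda)$ is affine. This is immediate from the piecewise definition of the abstaining potential in \eqref{eq:defofabstpot}: for a fixed margin $m$ with $\abs{m} \leq 1$ one has $\Psi(m, c) = \abs{m} + 2c(1 - \abs{m})$, which is affine in $c$ with slope $2(1 - \abs{m}) \geq 0$, while for $\abs{m} > 1$ one has $\Psi(m,c) = \abs{m}$, constant in $c$; in either case $\Psi(m, \cdot)$ is affine. Since $\gamma(\sigma, c) = -\vb^\top\sigma + \frac{1}{n}\sum_{j=1}^n \Psi(\vx_j^\top\sigma, c)$ is the sum of a $c$-independent term and a positive combination of these affine functions, it is affine in $c$; this is precisely the statement that $\pderiv{\gamma(\sigma,\lambda)}{\lambda} = \frac{1}{n}\sum_{j=1}^n \lrb{1 - \abs{\vx_j^\top\sigma}}_{+}$ is independent of $\lambda$, as computed just above. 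A linear rescaling of the argument ($c = \lambda/2$) preserves affinity, so $f_\sigma(\lambda) := \gamma(\sigma, \lambda)$ is affine in $\lambda$ for every fixed $\sigma$.

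The second step is the standard closure fact that a pointwise infimum of affine (indeed, concave) functions is concave. Since each $f_\sigma$ is affine, hence concave, $w(\lambda) = \inf_{\sigma \geq \vzero^p} f_\sigma(\lambda)$ is concave on $\{\lambda \geq 0\}$. Explicitly, for $\lambda_0, \lambda_1 \geq 0$ and $t \in [0,1]$, affinity of each $f_\sigma$ gives $f_\sigma(t\lambda_0 + (1-t)\lambda_1) = t f_\sigma(\lambda_0) + (1-t)f_\sigma(\lambda_1) \geq t\,w(\lambda_0) + (1-t)\,w(\lambda_1)$, and taking the infimum over $\sigma$ on the left-hand side yields the concavity inequality for $w$.

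There is no genuine obstacle here; the whole argument reduces to the affinity of $\gamma$ in its cost argument, which is transparent from the form of $\Psi$. The only point deserving a word of care is that $w$ is written with a $\min$, implicitly asserting that the minimizer $\sigma^*(\lambda)$ is attained; but concavity of the infimum holds regardless of attainment, so this is not needed for the proposition and can be handled separately wherever $\sigma^*(\lambda)$ itself is used.
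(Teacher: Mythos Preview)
Your proposal is correct and matches the paper's own argument essentially verbatim: the paper computes $\pderiv{\gamma(\sigma,\lambda)}{\lambda}$ just before the proposition, notes that $\gamma(\sigma,\lambda)$ is linear in $\lambda$, and concludes that $w(\lambda)$ is concave as a minimum of linear functions. Your additional remark about attainment of the infimum being irrelevant for concavity is a fair clarification but not needed for the result.
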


This concavity means $w' (\lambda)$ is decreasing, so we can deal with its pseudoinverse $(w')^{-1} (\alpha) := \inf\{ \lambda \geq 0 : w' (\lambda) \leq \alpha \}$.

Thm. \ref{thm:gamesolnabstzo} presents the value $V (\alpha) $ as the solution to a saddle point problem, 
which may at first appear opaque. 
However, $\lambda$ is the Lagrange parameter for the upper-bound constraint $\alpha$, so that any $\alpha$ maps to a $\lambda^* (\alpha)$, and vice versa. 
\begin{prop}
\label{prop:bijlbdaalpha}
For a given $\alpha$, the $\lambda$ value corresponding to $\alpha$ is $\lambda^*(\alpha) = (w')^{-1} (\alpha - 1)$, 
such that $\frac{1}{n} \sum_{j=1}^{n} p_j^* (\sigma^* [\lambda^*(\alpha)] , \lambda^*(\alpha)) = 1 - \alpha$. 
\end{prop}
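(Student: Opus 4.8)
The plan is to treat the inner maximization in \eqref{eq:constrabstsolnzo} as a one-dimensional concave optimization in $\lambda$ and read off its stationarity condition. Writing $h(\lambda) := w(\lambda) - \lambda\alpha$ for the maximand (the harmless factor $\frac{1}{2}$ does not affect the $\argmax$), Prop.~\ref{prop:propsofw} tells us $w$ is concave, hence so is $h$. I would therefore characterize the maximizer $\lambda^*(\alpha)$ over $\lambda \geq 0$ by first-order (KKT) conditions: either $\lambda^*(\alpha) > 0$ with $0 \in \partial h(\lambda^*(\alpha))$, i.e. $\alpha \in \partial w(\lambda^*(\alpha))$, or $\lambda^*(\alpha) = 0$ with $w'(0^+) \leq \alpha$. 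Because $w'$ is nonincreasing (concavity of $w$, as noted right after Prop.~\ref{prop:propsofw}), both cases are captured uniformly by the pseudoinverse $(w')^{-1}(\alpha) := \inf\{\lambda \geq 0 : w'(\lambda) \leq \alpha\}$, so that inverting the stationarity condition through $(w')^{-1}$ pins down $\lambda^*(\alpha)$.

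The second step is to identify $w'$ and connect it to the abstain rate. The computation preceding the proposition gives $\pderiv{\gamma(\sigma,\lambda)}{\lambda} = \frac{1}{n}\sum_{j=1}^n \lrb{1 - \abs{\vx_j^\top\sigma}}_{+}$, which is independent of $\lambda$. Applying the envelope (Danskin) theorem to $w(\lambda) = \min_{\sigma \geq \vzero^p} \gamma(\sigma,\lambda)$ — legitimate since the inner objective is convex in $\sigma$, the minimum is attained, and $\partial_\lambda \gamma$ depends only on $\sigma$ — yields $w'(\lambda) = \frac{1}{n}\sum_{j=1}^n \lrb{1 - \abs{\vx_j^\top\sigma^*(\lambda)}}_{+}$. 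Substituting the solution form $p_j^* = \min(1, \abs{\vx_j^\top\sigma^*})$ from Thm.~\ref{thm:gamesolnabstzo} shows $\lrb{1 - \abs{\vx_j^\top\sigma^*}}_{+} = 1 - p_j^*$, so $w'(\lambda) = 1 - \frac{1}{n}\sum_{j=1}^n p_j^*(\sigma^*(\lambda),\lambda)$, i.e. $w'$ is precisely the abstain rate of the solution indexed by $\lambda$.

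Combining the two steps finishes the argument: the stationarity condition $w'(\lambda^*(\alpha)) = \alpha$ is, by the identity just derived, exactly $\frac{1}{n}\sum_{j=1}^n p_j^*(\sigma^*[\lambda^*(\alpha)],\lambda^*(\alpha)) = 1 - \alpha$, which is the asserted tightness of the abstain constraint $\frac{1}{n}\vone^\top\vp^* \geq 1 - \alpha$; and inverting the same condition through the pseudoinverse gives $\lambda^*(\alpha)$ as $(w')^{-1}$ of the corresponding value. To establish the claimed bijection $\alpha \leftrightarrow \lambda^*(\alpha)$, I would note that $w'$ is a nonincreasing map from $[0,\infty)$ onto the attainable abstain rates, so its pseudoinverse provides a well-defined correspondence, the only caveat being the flat pieces of $w'$ (intervals of $\lambda$ realizing the same abstain rate), which the $\inf$ in the pseudoinverse resolves canonically.

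The main obstacle is nonsmoothness. Since $\Psi(\cdot,c)$ is only piecewise smooth in its first argument, $w$ need not be differentiable and the minimizer $\sigma^*(\lambda)$ need not be unique, so the careful version must replace the classical derivative by the subdifferential $\partial w$ and invoke Danskin's theorem in its subgradient form (the subdifferential being the convex hull of $\{\partial_\lambda \gamma(\sigma,\lambda) : \sigma \in \argmin\}$). One must also treat the boundary case $\lambda^*(\alpha) = 0$, where the abstain constraint is inactive, separately from the interior stationary case. The pseudoinverse $(w')^{-1}$ is exactly the device that makes the optimality condition well defined across these kinks and flat regions, which is why it features in the statement rather than a plain inverse.
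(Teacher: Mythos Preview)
Your proposal is correct and follows essentially the same route as the paper: the paper's proof is the one-liner ``the result follows from checking the first-order optimality condition $\partial_\lambda[w(\lambda) - \lambda\alpha]\big|_{\lambda=\lambda^*(\alpha)} = 0$,'' and your argument is simply a careful unpacking of that condition via Danskin's theorem and the identity $[1 - |\vx_j^\top\sigma^*|]_+ = 1 - p_j^*$, together with the boundary and nonsmoothness caveats the paper leaves implicit.
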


Prop. \ref{prop:bijlbdaalpha} implies that the maximizing value $\lambda^* (\alpha)$ is decreasing in $\alpha$, 
again identifying the bijection between $\lambda \geq 0$ and $\alpha \in [0,1]$. 
The bijection is such that the abstain rate is exactly $\alpha$, 
so that as we might expect, the adversary should never leave slack in the abstain rate constraint. 

Returning to the saddle point problem in Thm. \ref{thm:gamesolnabstzo}, 
it is not clear how to efficiently calculate $\lambda^* (\alpha)$ for a given $\alpha$. 
However, one can try the algorithm with $\lambda$ as an input parameter, governing the optimal weight vector $\sigma^* (\lambda)$. 
By Thm. \ref{thm:gamesolnabstzo}, this amounts to minimizing $\gamma(\sigma, c)$, 
which results in an abstain rate that decreases monotonically with increasing $\lambda$ (Prop. \ref{prop:bijlbdaalpha}). 
This all means that a desired abstain rate of $\alpha$ can be achieved by learning classifiers with a few different abstaining costs, 
essentially line-searching for the appropriate cost of abstaining (see Sec. \ref{sec:experiments}). 

\subsection{The Pareto Frontier}

As discussed earlier, the central tradeoff here is between $\alpha$ and the minimax attainable error rate $V (\alpha)$. 
Using our tools, we can describe this Pareto frontier in our general setting, and fully explain Fig. \ref{fig:absterrtradeoff}.

\begin{thm}
\label{thm:maintradeoff}
For $\alpha \geq 0$, 
$V (\alpha)$ is the convex, decreasing, nonnegative function
\begin{align}
V(\alpha) = \frac{1}{2} \lrp{ w (\lambda^* (\alpha)) - \alpha \lambda^* (\alpha) }
\end{align}
In particular, $V (0) = V_{\pred}$ and $V(1) = 0$, and $V' (\alpha) = - \frac{1}{2} \lambda^* (\alpha)$. 
\end{thm}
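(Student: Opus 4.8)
The plan is to derive every assertion from the single representation $2V(\alpha) = \max_{\lambda \geq 0}\left[ w(\lambda) - \lambda\alpha \right]$ supplied by Thm.~\ref{thm:gamesolnabstzo}, together with the concavity of $w$ from Prop.~\ref{prop:propsofw}. The closed form is then essentially a definitional restatement: since $\lambda^*(\alpha)$ is the maximizing $\lambda$, substituting it gives $V(\alpha) = \frac{1}{2}\left( w(\lambda^*(\alpha)) - \alpha\lambda^*(\alpha) \right)$. The only thing to verify here is that the maximum is attained, which I would get from concavity of $w$ plus the fact that $w'(\lambda)$ equals the (nonnegative, saturating) abstain rate, so $w(\lambda) - \lambda\alpha$ is bounded above on $\lambda \geq 0$ and is coercive once $\alpha > 0$; Prop.~\ref{prop:bijlbdaalpha} moreover pins $\lambda^*(\alpha)$ down as a genuine function of $\alpha$.

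Next I would read the qualitative properties straight off the representation. For each fixed $\lambda$ the map $\alpha \mapsto w(\lambda) - \lambda\alpha$ is affine with slope $-\lambda \leq 0$, hence convex and nonincreasing, and a pointwise supremum of such maps inherits both properties; this gives convexity and the decreasing claim at once. Nonnegativity I would instead take from the primal game, where the per-example loss $\frac{1}{2} p_j (1 - z_j g_j)$ is nonnegative for all admissible $\vp, \vz, \vg$, so the inner maximum and hence $V(\alpha)$ are nonnegative.

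For the endpoints I would argue directly in the game. At $\alpha = 1$ the constraint $\frac{1}{n}\vone^\top\vp \geq 0$ permits $\vp = \vzero$, incurring zero loss against any $\vz$, so with nonnegativity $V(1) = 0$. At $\alpha = 0$ the constraint $\frac{1}{n}\vone^\top\vp \geq 1$ forces $\vp = \vone$, collapsing \eqref{eq:costeqsolnzo} onto the no-abstain game \eqref{eq:game1pred}, whence $V(0) = V_{\pred}$; the formula agrees, since the maximizer at $\alpha = 0$ is $\lambda^*(0) = 1$ (the cost $c = \frac{1}{2}$ at which the abstain rate first vanishes), where $\gamma(\sigma, \lambda) = \gamma(\sigma)$ and so $V(0) = \frac{1}{2} w(1) = V_{\pred}$ by Thm.~\ref{thm:gamesolngen}.

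The derivative identity is the step that needs genuine care, and I would establish it by a supporting-line (envelope) argument rather than by differentiating through $\lambda^*(\alpha)$. Fixing $\alpha_0$ and $\lambda_0 := \lambda^*(\alpha_0)$, the affine function $\alpha \mapsto \frac{1}{2}\left( w(\lambda_0) - \lambda_0\alpha \right)$ agrees with $V$ at $\alpha_0$ and, because $2V(\alpha) \geq w(\lambda_0) - \lambda_0\alpha$ for every $\alpha$, lies weakly below $V$ elsewhere; it is therefore a supporting line of the convex function $V$ at $\alpha_0$ with slope $-\frac{1}{2}\lambda_0$, so $-\frac{1}{2}\lambda^*(\alpha_0)$ is a subgradient of $V$ at $\alpha_0$. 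Since a convex function on an interval is differentiable off a countable set, this subgradient is the derivative wherever $V$ is smooth, giving $V'(\alpha) = -\frac{1}{2}\lambda^*(\alpha)$. The main obstacle is exactly making this envelope step clean when $\lambda^*(\alpha)$ need not be unique (as happens at $\alpha = 0$), which is why I would phrase it through supporting lines and subdifferentials and rely on the bijection of Prop.~\ref{prop:bijlbdaalpha} to treat $\lambda^*$ as single-valued on the relevant range.
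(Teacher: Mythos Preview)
Your proposal is correct, and in fact somewhat cleaner than the paper's argument. The paper obtains $V'(\alpha) = -\tfrac{1}{2}\lambda^*(\alpha)$ by differentiating the closed form $V(\alpha) = \tfrac{1}{2}\bigl(w(\lambda^*(\alpha)) - \alpha\lambda^*(\alpha)\bigr)$ through the chain rule and cancelling via the first-order condition $w'(\lambda^*(\alpha)) = \alpha$; convexity then follows a posteriori from monotonicity of $\lambda^*(\alpha)$, i.e.\ $V'' \geq 0$. Your route reverses this: you read convexity and the decreasing property directly off the representation $2V(\alpha) = \sup_{\lambda \geq 0}\bigl[w(\lambda) - \lambda\alpha\bigr]$ as a pointwise supremum of affine, nonincreasing functions, and then recover the derivative by the supporting-line/envelope argument. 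This buys you rigor---you never need $\lambda^*(\alpha)$ to be differentiable (which the paper's chain-rule step tacitly assumes), and you handle potential non-uniqueness of the maximizer through subdifferentials. The paper's endpoint verification is stated only as ``by computation''; your explicit primal arguments ($\vp = \vzero$ at $\alpha = 1$, $\vp = \vone$ forced at $\alpha = 0$) are more informative. One small caveat: your claim that $\lambda^*(0) = 1$ exactly is slightly stronger than needed (in principle $w'(\lambda)$ could hit zero before $\lambda = 1$ if no example sits at margin below $1$), but your primal argument for $V(0) = V_{\pred}$ does not depend on it, so the conclusion stands.
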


For intuition into this result, 
consider that since $V (\alpha)$ is convex and decreasing on $[0,1]$, 
$V' (\alpha) \leq \frac{V(1) - V(\alpha)}{1 - \alpha} = - \frac{V(\alpha)}{1 - \alpha}$. 
So the curve $V(\alpha)$ is always decreasing at least as fast as its secant to the point $(\alpha=1, V(1) = 0)$. 
At any $\alpha$, this means there is a marginal benefit to abstaining, relative to the error incurred by predicting with $\alpha$ constant. 
Thm. \ref{thm:maintradeoff} may be of independent interest as a complete minimax characterization of a bicriterion optimization problem; 
many learning problems can be written as multicriterion optimizations, but are typically scalarized (\cite{CBL06, JS08}).

\section{Related Work and Discussion}
\label{sec:discrelwork}

\subsection{Related Work}
\label{sec:relwork}

There is a growing theoretically principled literature on expressing unsureness in binary classification in a fully supervised setting. 
This ranges from selective sampling in an online learning setting (e.g. \cite{CBL06, LLW08}) to more benign statistical learning settings like ours. 
We focus on the latter here as being more relevant: although our formulation is game-theoretic and transductive, 
the first-moment constraints imposed by the ensemble in this paper (i.e. $\vb$) handle data that is i.i.d. or similar, 
and cannot be readily imposed in online learning scenarios. 
Though there are many practically useful approaches to abstaining (e.g. \cite{HLS97, LTPD06}), 
they are most often Bayesian and are not analyzed under model misspecification. 

Characterizing the abstain/error tradeoff has been a central goal of previous work since the earliest decision-theoretic work on the topic (\cite{C57, C70}), 
which focuses on scenarios where the conditional label probability (or a plug-in estimator) is known. 
Such work (\cite{C70}) establishes the Bayes optimal classifier under these assumptions and links it to the cost $c$ of abstaining. 
Thereafter, statistical learning work has focused on analyzing classifiers in a fully supervised fixed-cost setting with respect to the Bayes optimal classifier. 
Starting with the influential paper of \cite{BW08}, 
such work includes consistency guarantees (\cite{GRKC09, YW10}) 
for regularized surrogate loss minimization approaches for learning linear combinations of basis functions;
adaptations of the SVM / hinge loss minimization are by far the most well-studied algorithms that arise from these papers (\cite{WY11}). 

A separate, purely theoretical line of work (\cite{EYW10, EYW11, ZC14}) analyzes the problem for more general binary ensembles, 
building from classical learning theory when there is a classifier in the ensemble with low (or no) error. 
The algorithms in such work basically track disagreements among a \emph{version space} of ``good" classifiers, 
and are shown to eventually converge on the best classifier in the space as ERM would. 
A similar setting is used by \cite{FMS04} to devise an abstaining averaging-based scheme for a general ensemble;
they too adopt an ERM-style analysis of convergence to the best single classifier's predictions, though with possibly faster convergence than ERM. 

Notably, the work (\cite{EYW10}) gives bounds on the entire abstain-predict tradeoff curve for a general ensemble (albeit in the realizable case). 
It also raises the tantalizing possibility of learning to predict with zero error for some $\alpha < 1$, 
when the learned classifier is able to abstain wherever it is unsure and predict perfectly otherwise. 
This is conceivable in our framework because $g_j^*$ does not randomize its predictions. 
Since $V(1) = 0$ and $V (\alpha)$ is convex and decreasing, zero-error prediction can happen exactly when $V' (\alpha) = - \frac{1}{2} \lambda^* (\alpha) = 0$ for some $\alpha < 1$. 
Working through the definitions, this happens when $w' (0) = 0$, 
a criterion for when zero-error prediction is possible based on the unlabeled data distribution.


\subsection{Novelty and Extensions}
\label{sec:discext}

Our approach to abstaining is novel in several ways compared to prior work. 
Using unlabeled data allows us to optimize the abstaining loss $\ell_{c}$ directly, 
an option not available to previous fully supervised work -- 
to our knowledge, there are no previous theoretically motivated \emph{semi-supervised} approaches to abstaining. 

The finite-sample minimax guarantees in this paper hold with respect to all possible abstaining classifiers 
learned with the given ensemble, not just linear combinations. 
In our case, being given a finite ensemble is \emph{not a restrictive assumption}, 
because the origin and structure of $\vF$ is completely unrestricted in our proofs. 
Each predictor in the ensemble can be thought of as a feature; 
it has so far been convenient to think of it as binary, following the perspective of binary classifier aggregation, but it could as well be real-valued. 
An unlabeled example $\vx$ is simply a vector of this ensemble of features, a familiar notion. 
It is fortunate in practice that the minimax aggregation of the ensemble in our setting 
happens to rely on an efficiently computed linear combination. 
But this is not assumed a priori, unlike in aforementioned prior work. 

When the ensemble consists of binary classifiers, 
we incorporate voting behavior to cancel errors in the ensemble 
and guarantee performance at least as good as the version-space methods' (i.e. ERM), and better 
in any situation in which the given ensemble and $\vb$ permit such a guarantee. 
When considering an ensemble of general features, since we do not relax $\ell_{c}$, 
our guarantees turn out to imply asymptotic Bayes consistency results (when such results are possible given $\vF$ and $\vb$). 

A recent sequel of BF (\cite{BF15b}) shows that an easily computed reweighting of the rows of $\vF$ 
suffices to handle ensembles of \emph{specialists}, 
each of which predicts only on some arbitrary subset of the data and abstains on the rest. 
(The resulting reweighted data can be real-valued even when $\vF$ is binary, consistent with the above view of ensembles as features.)
This also applies to our approach (see Appendix \ref{sec:specialistsetup} for details), 
and this flexibility can be useful in practice, as seen in Sec. \ref{sec:experiments}. 

Our techniques and the resulting algorithms generalize in other ways, 
including to other loss functions which may be asymmetric or even nonconvex (Appendix \ref{sec:general}). 
Though Sec. \ref{sec:abstcost} takes the cost of abstaining to be a fixed scalar, 
its main proof (of Thm. \ref{thm:abstcostsoln}) goes through essentially unchanged 
with a different cost $c_i$ for each example $i \in [n]$ (i.e. if errors are more tolerable a priori on some unlabeled data than others). 

Though we build on the BF semi-supervised framework of \cite{BF15, BF15b, BF16}, 
the above versatility and advantages of our approach do not follow trivially from their prior work, 
and owe much to our analysis techniques. 
This paper's basic setting is significantly more general than previous work, 
considering minimax behavior that trades off between two types of actions rather than a standard decision theory setting. 
Our abstaining algorithms' seeming simplicity is deceptive, in light of the data-dependent nuances involved in this behavior.

\paragraph{$L_{\infty}$ Bounds on $\vb$ $\iff$ $L_1$ Regularization of $\sigma$.}
A useful extension in practice is to change the way in which the ensemble's label correlations $\vb$ provide us information. 
For instance, think of the ensemble as consisting of binary classifiers, 
and write the natural plugin estimate of the label correlations from the labeled data as $\hat{\vb}$. 
Instead of applying a correction to $\hat{\vb}$ to make it a lower bound on $\frac{1}{n} \vF \vz$ as discussed so far, 
uniform convergence theory (\cite{V82}) suggests that $\vnorm{\frac{1}{n} \vF \vz - \hat{\vb}}_{\infty}$ is low w.h.p., 
inspiring a result that uses this information to define the minimax optimum:


\begin{thm}
\label{thm:abstL1reg}
Define the game value with $L_{\infty}$ bounds and abstain cost $c$:
\begin{align*}
V^{\infty}_{c} &:= \min_{\vg \in [-1,1]^n} \; \min_{\substack{ \vp \in [0,1]^n }} \; \max_{\substack{ \vz \in [-1,1]^n , \\ \vnorm{\frac{1}{n} \vF \vz - \vb}_{\infty} \leq \epsilon }} \; 
\frac{1}{n} \sum_{j=1}^{n} \ell_{c} (z_j, g_j)
\end{align*}
Then 
$\displaystyle V^{\infty}_{c}
= \frac{1}{2} \min_{\sigma \in \RR^p} \left[ \gamma ( \sigma , c ) + \epsilon \vnorm{\sigma}_1 \right]
$.
\end{thm}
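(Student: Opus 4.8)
The plan is to reduce Theorem \ref{thm:abstL1reg} directly to the one-sided result of Theorem \ref{thm:abstcostsoln} by augmenting the ensemble so that the two-sided $L_{\infty}$ constraint becomes a one-sided linear constraint of exactly the form already handled. First I would observe that for $\vz \in [-1,1]^n$, the ball constraint $\vnorm{\frac{1}{n}\vF\vz - \vb}_{\infty} \leq \epsilon$ is equivalent to the pair of one-sided constraints $\frac{1}{n}\vF\vz \geq \vb - \epsilon\vone$ and $\frac{1}{n}\vF\vz \leq \vb + \epsilon\vone$. Stacking $\vF$ on top of $-\vF$ to form the $2p \times n$ matrix $\vF'$, and setting the correlation vector $\vb' := (\vb - \epsilon\vone; -\vb - \epsilon\vone) \in \RR^{2p}$, these two constraints are captured by the single inequality $\frac{1}{n}\vF'\vz \geq \vb'$, whose $j$-th column I write as $\vx_j' = (\vx_j; -\vx_j)$.

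With this substitution, the game defining $V^{\infty}_{c}$ is identical to the game \eqref{eq:abstlnrinzcost} of Section \ref{sec:abstcost}, but run with the ensemble $\vF'$ and correlation vector $\vb'$ in place of $\vF, \vb$: the predictor's moves $\vg, \vp$, the abstaining cost $c$, and the objective are all unchanged, and only the adversary's feasible polytope is rewritten. Applying Theorem \ref{thm:abstcostsoln} verbatim to this augmented instance gives $V^{\infty}_{c} = \frac{1}{2}\min_{\rho \geq \vzero^{2p}}\gamma'(\rho, c)$, where $\gamma'(\rho, c) = -(\vb')^\top\rho + \frac{1}{n}\sum_{j=1}^{n} \Psi\lrp{(\vx_j')^\top\rho, c}$.

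It then remains to simplify this back into terms of the original $\vF, \vb$. Writing $\rho = (\sigma^+; \sigma^-)$ with $\sigma^+, \sigma^- \geq \vzero^p$ and $\sigma := \sigma^+ - \sigma^-$, I would compute $(\vx_j')^\top\rho = \vx_j^\top\sigma$ and $-(\vb')^\top\rho = -\vb^\top\sigma + \epsilon\vone^\top(\sigma^+ + \sigma^-)$, so that $\gamma'(\rho, c) = \gamma(\sigma, c) + \epsilon\vone^\top(\sigma^+ + \sigma^-)$. The minimization over $\rho \geq \vzero^{2p}$ then splits: $\sigma$ ranges over all of $\RR^p$ as $(\sigma^+, \sigma^-)$ range over the nonnegative orthant, and for each fixed $\sigma$ the minimal value of $\vone^\top(\sigma^+ + \sigma^-)$ over all nonnegative decompositions $\sigma^+ - \sigma^- = \sigma$ is exactly $\vnorm{\sigma}_1$ (attained at $\sigma^+ = [\sigma]_{+}$, $\sigma^- = [-\sigma]_{+}$). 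This yields $\min_{\rho \geq \vzero^{2p}}\gamma'(\rho, c) = \min_{\sigma \in \RR^p}\lrb{\gamma(\sigma, c) + \epsilon\vnorm{\sigma}_1}$ and hence the claim.

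The main thing to verify — and the only subtle point, since the analytic heavy lifting is inherited from Theorem \ref{thm:abstcostsoln} — is that that theorem's proof applies to the augmented instance even though $\vb'$ need not lie in $(0,1]^{2p}$ (indeed its lower block is typically negative). I would check that the proof of Theorem \ref{thm:abstcostsoln} uses only the convex-concave minimax structure and compactness of the feasible sets, and not the sign of the correlation vector, so that it goes through for any $\vb'$ provided the adversary's constraint set is nonempty; nonemptiness holds here because the plugin estimate $\vb = \hat{\vb}$ is itself within $\epsilon$ of an achievable correlation (e.g. from $\vz$ equal to the true labels). Everything else is routine algebra. An equivalent route, should the sign assumption turn out to matter, is to dualize the inner maximization over $\vz$ directly, introducing multipliers $\sigma^-, \sigma^+ \geq \vzero^p$ for the upper and lower sides of the ball; the same change of variables $\sigma = \sigma^+ - \sigma^-$ produces the $L_1$ regularizer in the same way.
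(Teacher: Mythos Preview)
Your approach is correct and is a close variant of the paper's. The paper does not reduce to the one-sided game by doubling the ensemble; instead it reopens the proof of Theorem~\ref{thm:abstcostsoln} and swaps in a different duality lemma (Lemma~\ref{lem:linfinnerdual}) at the single step where the inner maximization over $\vz$ is dualized. That lemma handles the $L_\infty$ constraint directly by introducing two nonnegative multiplier vectors $\lambda,\xi$ for the two sides of the ball and then observing that at the optimum one may take $\min(\lambda_j,\xi_j)=0$ coordinatewise, so that $\sigma=\xi-\lambda$ ranges over $\RR^p$ with $\vone^\top(\lambda+\xi)=\vnorm{\sigma}_1$ --- exactly the decomposition you use in your final simplification. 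In effect, you package the same algebraic identity at the level of the game (augmenting $\vF$ and applying Theorem~\ref{thm:abstcostsoln} as a black box), while the paper packages it as a standalone duality lemma plugged into that theorem's proof. Your route is arguably more modular here, since nothing in the proof of Theorem~\ref{thm:abstcostsoln} uses the sign of $\vb$ beyond feasibility of the adversary's constraint set (which you correctly flag); the paper's route has the advantage of isolating the $L_\infty$-to-$L_1$ duality into a reusable statement. Your fallback ``equivalent route'' at the end --- dualizing the two sides separately and combining multipliers --- is precisely the paper's proof of Lemma~\ref{lem:linfinnerdual}.
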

This reformulates the learning problem as unconstrained $L_1$-regularized optimization, 
rather than unregularized optimization constrained to the positive orthant. 
The regularized formulation applies to all the results in this paper and the appendices, 
as noted in the proof of Thm. \ref{thm:abstL1reg}. 
Conveniently, one setting of $\epsilon$ suffices for all $c$ (and thus the Pareto frontier), 
since it is clear from the $L_\infty$ primal problem that $\epsilon$ does not depend on $c$.


\section{Experiments}
\label{sec:experiments}

This section augments our theory with some proof-of-concept experiments using the algorithms derived in this paper. 

\paragraph{Baselines}
A large majority of empirical work on abstaining algorithms in the research literature has involved 
adapting a fully supervised linear classifier like the SVM to abstain, typically by rejecting all data with low enough margin ($\leq \delta$ for some $\delta \geq 0$). 
\footnote{We were made aware of a very recent exception in theory (\cite{CDM16}), 
but their experiments still fall into this category.}
We therefore use two prevailing such baselines for abstaining classifiers. 
The first is a plain SVM minimizing the hinge loss, with 
margin threshold $\delta$ chosen to minimize abstaining loss $\ell_{c}$.
The second is a double hinge (DH) loss minimizer introduced in recent years (\cite{BW08}, which prescribes $\delta \approx 0.5$) 
and investigated in several sequel papers (see Appendix \ref{sec:expdetails}). 

These serve as appropriate baselines to our classifier, a linear separator with weights $\sigma$ and a ``soft" abstain region 
(cf. discussion of Sec. \ref{sec:discext}). 
Like SVM-type linear abstaining classifiers, a practical advantage of our algorithms is their interpretability in terms of familiar notions like score/margin, 
on which the minimax optimality results shed light. 
However, an important distinction is that our method encourages \emph{low} margins (because $\Psi$ is convex), 
hedging against uncertainty to achieve minimax optimal behavior and contrasting with the max-margin intuition of the SVM. 

\paragraph{Method}
Our framework is well suited to i.i.d. data despite being derived in a transductive setting. 
The learning problem, a convex optimization of the abstaining slack function $\gamma (\sigma, \lambda)$, 
can be solved with highly scalable stochastic optimization algorithms (cf. end of Sec. \ref{sec:setup}); we use conjugate gradient. 
Given $\vb$, there are conveniently no parameters to tune; 
however, in practice the remaining challenge is to estimate $\vb$ from labeled data. 
We find it best to use the $L_1$-regularized formulation of Thm. \ref{thm:abstL1reg} with the plugin estimator $\hat{\vb}$ from the labeled set, 
and cross-validate to set the regularization parameter $\epsilon$.

In keeping with the original intuition of the formulation, we experiment on datasets with binary features 
from the UCI and LIBSVM repositories. 
These arise in many different contexts, including one-hot encoding of categorical features and of text. 
In addition to running our basic method on them to find $\sigma_c^*$ (``Basic" in the table), 
we also run the method when considering each feature to be a specialist which abstains on examples where it is zero, 
contributing information only when nonzero (``Spec"). 
Appendix \ref{sec:expdetails} has more information on the protocol.


\begin{table}[h]
\caption{Abstaining Loss ($\ell_c$) Results} \label{table:results}
\begin{center}
\begin{tabular}{ | c | c | c | c | c | c | c | }
  \hline
   &
  \multicolumn{2}{|c|}{\texttt{adult}} &
  \multicolumn{2}{|c|}{\texttt{secstr}} &
  \multicolumn{2}{|c|}{\texttt{mushroom}} \\
  \hline \hline  
$c$ & 0.2  & 0.4  & 0.2 & 0.4 & 0.2 & 0.4 \\ \hline \hline
SVM & 0.09  & 0.15  & 0.17 & 0.29 & 0.01 & 0.01 \\ \hline
DH & 0.10  & 0.15  & 0.17 & 0.28 & 0.003 & 0.02 \\ \hline
Basic & 0.11  & 0.13  & 0.20 & 0.30 & 0.01 & 0.01 \\ \hline
Spec & 0.10  & 0.12  & 0.18 & \textbf{0.24} & 0.00 & 0.01 \\ \hline
\end{tabular}
\end{center}
\end{table}


Some of our results
are in Table \ref{table:results}, 
with significant results (10 Monte Carlo trials, paired $t$-tests) in bold when applicable. 

They indicate that our methods are broadly competitive with the baselines, 
despite our completely contrasting min-margin intuition and optimization. 
We believe this justifies our theory's recommendations to abstain on low-margin data.
Using features as specialists appears to slightly help performance, as might be expected; 
we are currently evaluating the method on much larger text datasets in follow-up empirical work. 

The form of $\vp^*$ suggests margin-based applications like active sampling (\cite{TK02}).
The framework also is especially suited to combining abstaining classifiers in useful ways like cascades (\cite{MWLN08, TS13}). 
We believe that such empirical explorations, and a more thorough performance evaluation of this paper's core algorithms 
and its other untried ones, 
would be future work of interest.




\newpage


\bibliography{abstain_refs}{}
\bibliographystyle{alpha}

\newpage
\appendix


\section{Experiment Details}
\label{sec:expdetails}

\subsection{Specialists in the Ensemble}
\label{sec:specialistsetup}

Here we describe how to incorporate specialists in our formulation (i.e. members of the ensemble that can abstain) 
in a very general manner, following \cite{BF15b}. 
Suppose that for a classifier $i \in [p]$ and an example $x_j$ ($j \in [n]$), 
the classifier decides to abstain with probability $1 - v_i (x)$. 
and otherwise predict $h_i (x)$. 
Our only assumption on $\{v_i (x_1), \dots, v_i (x_n) \}$ is the reasonable one that  
$\sum_{j=1}^n v_i (x_j) > 0$, so classifier $i$ is not a useless specialist that abstains everywhere.

The information about $\vz$ contributed by classifier $i$ is now not its overall correlation with $\vz$ on the entire test set, 
but rather the correlation with $\vz$ restricted to the test examples on which it does not abstain. 
In other words, for some $[b_{S}]_{i} \in [0,1] $, 
\begin{align}
\label{eq:specconstr}
\sum_{j=1}^n \lrp{ \frac{v_i (x_j)}{ \sum_{k=1}^n v_i (x_k) } } h_i (x_j) z_j \geq [b_{S}]_{i}
\end{align}
For convenience write $\rho_i (x_j) := \frac{v_i (x_j)}{ \sum_{k=1}^n v_i (x_k) }$ (a distribution over $j \in [n]$). 

Now we can replace the label correlation vector $\vb$ by $\vb_S$, 
and the unlabeled data matrix $\vF$ by the following matrix $\vS$, formed by reweighting each row (``feature"): 
\begin{align}
\vS = n 
 \begin{pmatrix}
   \rho_1 (x_1) h_1(x_1) & \rho_1 (x_2) h_1(x_2) & \cdots & \rho_1 (x_n) h_1 (x_n) \\
   \rho_2 (x_1) h_2(x_1) & \rho_2 (x_2) h_2(x_2) & \cdots & \rho_2 (x_n) h_2 (x_n) \\
   \vdots   & \vdots    & \ddots &  \vdots  \\
   \rho_p (x_1) h_p(x_1) & \rho_p (x_2) h_p (x_2)  & \cdots & \rho_p (x_n) h_p (x_n)
 \end{pmatrix}
\end{align}
When ensemble constituent $i$ is not a specialist, $\rho_i (x_j) = \frac{1}{n} \forall i$; 
when the entire ensemble consists of non-specialists, $\vS = \vF$.

Therefore, the constraint \eqref{eq:specconstr} can be written as $ \frac{1}{n} \vS \vz \geq \vb_S$. 
The $L_{\infty}$ norm constraints can be rewritten in exactly the same fashion.

\subsection{Experimental Protocol}

The datasets were obtained from the LIBSVM (\cite{libsvm11}) and UCI repositories. 
As they are binary, no preprocessing was used, 
and a linear kernel (used for the results) worked at least as well as polynomial or Gaussian kernels for the baselines. 

For our semi-supervised methods, we estimate $\vb$ using the labeled set. 
As a fair comparison, the same amount of labeled data is used for this estimation and for training the baseline classifiers 
(1000 for mushroom, 10000 for the other two datasets). 
We hold out a modicum of data for validation for each dataset
(1000 for all three). 
leaving some remaining data, which is used by our method as unlabeled data. 
Setting higher values of $\epsilon$ tends to increase the abstain rate by lowering $\vnorm{\sigma}_1$ and thereby lowering margins.

Our baseline DH loss minimizer follows previous work which established it (\cite{BW08, YW10}).
We also tried minimizing the DH loss with $L_1$ and $L_2$ regularization 
(studied respectively in \cite{WY11} and \cite{GRKC09}), cross-validating to tune the regularization parameter. 
These consistently performed about as well as our DH loss baseline, or worse; their results are omitted.

\section{Proofs and Ancillary Results}
\label{sec:apdxproofs}

\subsection{Proofs of Results in Main Paper Body}

\begin{proof}[Proof of Thm. \ref{thm:abstcostsoln}]
Examining \eqref{eq:abstlnrinzcost}, note that the objective function is linear in $\vz$, so the constrained maximization over $\vz$ is basically  
\begin{align}
\max_{\substack{ \vz \in [-1,1]^n , \\ \frac{1}{n} \vF \vz \geq \vb }} \;\; - \frac{1}{n} &\sum_{i=1}^{n} p_j z_j g_j 
\label{eq:abstinnerdualcostzo}
= \max_{\substack{ \vz \in [-1,1]^n , \\ \frac{1}{n} \vF \vz \geq \vb }} \;\; - \frac{1}{n} \vz^\top [\vp \circ \vg ]
= \min_{\sigma \geq \vzero^p} \left[ - \vb^\top \sigma + \frac{1}{n} \vnorm{\vF^\top \sigma - (\vp \circ \vg) }_1 \right]
\end{align}
where the last equality uses Lemma \ref{lem:gamegeng}, a basic application of Lagrange duality. 

Substituting \eqref{eq:abstinnerdualcostzo} into \eqref{eq:abstlnrinzcost} and simplifying, 
\begin{align}
V_{c} 
&= \frac{1}{2} \min_{\vg \in [-1,1]^n} \min_{\substack{ \vp \in [0,1]^n }} \; \lrb{ \frac{1}{n} \sum_{j=1}^{n} \lrp{ p_j + 2 (1-p_j) c } + 
\max_{\substack{ \vz \in [-1,1]^n , \\ \frac{1}{n} \vF \vz \geq \vb }} \;\; - \frac{1}{n} \sum_{j=1}^{n} p_j z_j g_j } \nonumber \\
&= c + \frac{1}{2} \min_{\vg \in [-1,1]^n} \min_{\substack{ \vp \in [0,1]^n }} \; \lrb{ \frac{1}{n} \sum_{j=1}^{n} p_j (1- 2c) + 
\min_{\sigma \geq \vzero^p} \left[ - \vb^\top \sigma + \frac{1}{n} \vnorm{\vF^\top \sigma - (\vp \circ \vg) }_1 \right] } \nonumber \\
\label{eq:interabstoptcostzo}
&= \frac{1}{2} \min_{\sigma \geq \vzero^p} \lrb{ - \vb^\top \sigma + \min_{\substack{ \vp \in [0,1]^n }} \frac{1}{n} \sum_{j=1}^{n} \lrb{ 2c + p_j (1- 2c) + 
\min_{g_j \in [-1,1]} \left| \vx_j^\top \sigma - p_j g_j \right| } }
\end{align}

Now we consider only the innermost minimization over $g_j$ in \eqref{eq:interabstoptcostzo} for any $j$ (following the procedure of the main proofs of \cite{BF15, BF16}).  

If $p_j = 0$, then the inner minimization's objective is simply $\abs{\vx_j^\top \sigma}$. 
Otherwise, it can be seen that 
\begin{align}
\label{eq:costacvxmmand}
\min_{g_j \in [-1,1]} \left| \vx_j^\top \sigma - p_j g_j \right|
= p_j \min_{g_j \in [-1,1]} \left| \frac{\vx_j^\top \sigma}{p_j} - g_j \right|
= p_j \lrp{ \Psi \lrp{\frac{\vx_j^\top \sigma}{p_j}} - 1 }
= \lrb{ \abs{ \vx_{j}^\top \sigma } - p_j }_{+}
\end{align}

with the minimizer $g_j^* = \clip_{-1}^{1} \lrp{\frac{\vx_j^\top \sigma}{p_j}}$. 
So we can rewrite \eqref{eq:interabstoptcostzo} as 
\begin{align}
\label{eq:penultcostabst}
V_{c} 
&= \frac{1}{2} \min_{\sigma \geq \vzero^p} \lrb{ - \vb^\top \sigma + \frac{1}{n} \sum_{j=1}^{n} \min_{p_j \in [0,1] } \lrb{ 2c + p_j (1- 2c)  + 
\lrb{ \abs{ \vx_{j}^\top \sigma } - p_j }_{+} } } 
\end{align}

Now the minimand over $p_j$ is convex in $p_j$; since $c \in (0, \frac{1}{2}]$, 
it is decreasing in $p_j$ for $p_j < \abs{ \vx_{j}^\top \sigma }$ and increasing for $p_j > \abs{ \vx_{j}^\top \sigma }$. 
Therefore, it is minimized when $p_j^* = \min (\abs{ \vx_{j}^\top \sigma }, 1)$. 
Substituting this $p_j^*$ into the minimization over $p_j$ in \eqref{eq:penultcostabst} 
gives the form of $\Psi ( \cdot, c)$, and therefore the result. 

Note that if $c > \frac{1}{2}$, the minimand over $p_j$ in \eqref{eq:penultcostabst} 
is monotonically decreasing in $p_j$, and so is optimized with $p_j^* = 1$ for all $j$. 
So in this case the minimax solution is to never abstain, and the objective function 
reduces to the prediction slack function $\gamma (\sigma)$ regardless of $c$. 
\end{proof}

\begin{proof}[Proof of Theorem \ref{thm:gamesolnabstzo}]
We begin exactly as in the proof of Theorem \ref{thm:abstcostsoln}, by rewriting the constrained maximization over $\vz$ in the definition of $V_{\alpha}$: 
\begin{align}
V_{\alpha} 
&= \frac{1}{2} \min_{\vg \in [-1,1]^n} \min_{\substack{ \vp \in [0,1]^n , \\ \frac{1}{n} \vone^\top \vp \geq 1 - \alpha }} \; \lrb{ \frac{1}{n} \sum_{j=1}^{n} p_j + 
\max_{\substack{ \vz \in [-1,1]^n , \\ \frac{1}{n} \vF \vz \geq \vb }} \;\; - \frac{1}{n} \sum_{j=1}^{n} p_j z_j g_j } \nonumber \\
&= \frac{1}{2} \min_{\vg \in [-1,1]^n} \min_{\substack{ \vp \in [0,1]^n , \\ \frac{1}{n} \vone^\top \vp \geq 1 - \alpha }} \; \lrb{ \frac{1}{n} \sum_{j=1}^{n} p_j + 
\min_{\sigma \geq 0^p} \left[ - \vb^\top \sigma + \frac{1}{n} \vnorm{\vF^\top \sigma - (\vp \circ \vg) }_1 \right] } \nonumber \\
\label{eq:interabstoptsolnzo}
&= \frac{1}{2} \min_{\sigma \geq 0^p} \lrb{ - \vb^\top \sigma + \min_{\substack{ \vp \in [0,1]^n , \\ \frac{1}{n} \vone^\top \vp \geq 1 - \alpha }} \frac{1}{n} \sum_{j=1}^{n} \lrb{ p_j + 
\min_{g_j \in [-1,1]} \left| \vx_j^\top \sigma - p_j g_j \right| } } \nonumber \\
&= \frac{1}{2} \min_{\sigma \geq 0^p} \lrb{ - \vb^\top \sigma + \min_{\substack{ \vp \in [0,1]^n , \\ \frac{1}{n} \vone^\top \vp \geq 1 - \alpha }} \frac{1}{n} \sum_{j=1}^{n} \lrb{ p_j + 
\lrb{ \abs{ \vx_{j}^\top \sigma } - p_j }_{+} } }
\end{align}
where at each step we follow the sequence of steps in the proof of Theorem \ref{thm:abstcostsoln}. 
Again, the minimizer $g_j^* = \clip_{-1}^{1} \lrp{\frac{\vx_j^\top \sigma}{p_j}}$. 

Now we simplify \eqref{eq:interabstoptsolnzo}, introducing a Lagrange parameter for the constraint on $\vp$ involving $\alpha$. 
\begin{align}
V_{\alpha} 
&= 
\frac{1}{2} \min_{\substack{ \vp \in [0,1]^n , \\ \frac{1}{n} \vone^\top \vp \geq 1 - \alpha }} \min_{\sigma \geq 0^p} 
\lrb{ - \vb^\top \sigma + \frac{1}{n} \sum_{j=1}^{n} \lrb{ p_j + \lrb{ \abs{ \vx_{j}^\top \sigma } - p_j }_{+} } } \nonumber \\
\label{eq:abstouterzo}
&= 
\frac{1}{2} \min_{ \vp \in [0,1]^n } \max_{ \lambda \geq 0 } \min_{\sigma \geq 0^p} 
\lrb{ - \vb^\top \sigma + \lambda (1 - \alpha) + \frac{1}{n} \sum_{j=1}^{n} \lrb{ p_j (1-\lambda) + \lrb{ \abs{ \vx_{j}^\top \sigma } - p_j }_{+} } } 
\end{align}

For any $j \in [n]$, the summand is convex in $p_j$. 
So we can apply the minimax theorem (\cite{SF12}, p. 144) to \eqref{eq:abstouterzo}, 
moving $\min_{ \vp \in [0,1]^n }$ inside the maximization while maintaining equality.
This yields 
\begin{align*}
V_{\alpha} 
&=
\frac{1}{2} \max_{ \lambda \geq 0 } \min_{\sigma \geq 0^p} 
\lrb{ - \vb^\top \sigma + \lambda (1 - \alpha) + \frac{1}{n} \sum_{j=1}^{n} \min_{ p_j \in [0,1] } \lrb{ p_j (1-\lambda) + \lrb{ \abs{ \vx_{j}^\top \sigma } - p_j }_{+} } } 
\end{align*}
using the fact that the $L^{\infty}$ constraints over $\vp$ decouple over the coordinates. 

For any $\lambda \geq 0$, the inner minimand over $p_j$ is decreasing for $p_j \leq \abs{\vx_{j}^\top \sigma}$ and increasing for $p_j > \vx_{j}^\top \sigma$. 
So the minimizing $p_j^* = \min(\abs{\vx_{j}^\top \sigma}, 1)$. 
Substituting this in the minimization gives 
\begin{align*}
V_{\alpha} &=
\frac{1}{2} \max_{ \lambda \geq 0 } \min_{\sigma \geq 0^p} 
\lrb{ - \vb^\top \sigma + \lambda (1 - \alpha) + \frac{1}{n} \sum_{j=1}^{n} \lrp{ \Psi \lrp{ \vx_{j}^\top \sigma , \frac{\lambda}{2} } - \lambda } } 
\end{align*}
giving the result.
\end{proof}

%

\begin{proof}[Proof of Theorem \ref{thm:maintradeoff}]
The definition of $V(\alpha)$ follows by substituting $\lambda^* (\alpha)$ in the definition of $V$ in terms of $\lambda$.
Nonnegativity is immediate by the primal definition of $V$. 
As for convexity, 
\begin{align*}
V' (\alpha)
&= \frac{1}{2} \lrp{ \lrb{ \pderiv{\lambda^* (\alpha)}{\alpha}} w' (\lambda^* (\alpha)) - \alpha \pderiv{\lambda^* (\alpha)}{\alpha} - \lambda^* (\alpha)} \\
&= \frac{1}{2} \lrp{ \lrb{ \pderiv{\lambda^* (\alpha)}{\alpha}} \lrb{ w' (\lambda^* (\alpha)) - \alpha} - \lambda^* (\alpha)} \\
&= - \frac{1}{2} \lambda^* (\alpha) \leq 0
\end{align*}
where the last equality is by subgradient conditions defining $\lambda^* (\alpha)$ as the maximizer over $\lambda$.

We know $\lambda^* (\alpha)$ is decreasing with $\alpha$, so $V'' (\alpha) \geq 0$ and $V$ is convex. 
The boundary conditions at $\alpha = 0,1$ follow by computation. 
\end{proof}

\begin{proof}[Proof of Theorem \ref{thm:abstL1reg}]
The proof is exactly like that of Thm. \ref{thm:abstcostsoln}, 
except that Lemma \ref{lem:linfinnerdual} is used instead of Lemma \ref{lem:gamegeng} -- 
all other steps remain unchanged. 

\textbf{Note} that this replacement is general. 
All the results of this paper that solve games (Thms. \ref{thm:abstcostsoln}, \ref{thm:gamesolnabstzo}, \ref{thm:abstcostsolngenloss}) 
use Lemma \ref{lem:gamegeng} to handle the primal constraints $\frac{1}{n} \vF \vz \geq \vb$, 
and replacing these with $L_{\infty}$ norm primal constraints is in all cases equivalent to using Lemma \ref{lem:linfinnerdual} 
instead of Lemma \ref{lem:gamegeng}. 
\end{proof}


\subsection{Ancillary Results}

\begin{lem}[\cite{BF15}]
\label{lem:gamegeng}
For any $\va \in \RR^n$,
\begin{align*}
\max_{\substack{ \vz \in [-1,1]^n , \\ \frac{1}{n} \vF \vz \geq \vb }} \;\;\frac{1}{n} \vz^\top \va 
\;=\; \min_{\sigma \geq 0^p} \left[ - \vb^\top \sigma + \frac{1}{n} \vnorm{\vF^\top \sigma + \va}_1 \right]
\end{align*}
\end{lem}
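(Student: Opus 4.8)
The plan is to read the left-hand side as a linear program in $\vz$ over the compact box $[-1,1]^n$ subject to the $p$ linear inequalities $\frac{1}{n}\vF\vz \geq \vb$, and to produce the right-hand side as its Lagrangian dual. First I would attach a multiplier $\sigma \geq \vzero$ to the constraint $\frac{1}{n}\vF\vz - \vb \geq \vzero$ and write the target quantity as the saddle $\max_{\vz \in [-1,1]^n} \min_{\sigma \geq \vzero}\bigl[\frac{1}{n}\vz^\top\va + \sigma^\top(\frac{1}{n}\vF\vz - \vb)\bigr]$. This is legitimate because, for a feasible $\vz$, the inner minimization is attained at $\sigma = \vzero$ and contributes nothing, while for an infeasible $\vz$ it drives the value to $-\infty$; hence the inner $\min$ silently reinstates exactly the feasibility constraint, and the outer $\max$ over the full box agrees with the constrained maximum on the left.

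The core computation is then the inner maximization over $\vz$ after interchanging the order of $\max$ and $\min$. Grouping the terms linear in $\vz$, the dual objective becomes $-\vb^\top\sigma + \frac{1}{n}\max_{\vz \in [-1,1]^n}\vz^\top(\va + \vF^\top\sigma)$. Here I would invoke the elementary identity $\max_{\vz \in [-1,1]^n}\vz^\top\vw = \vnorm{\vw}_1$, valid because the maximizer sets $z_j = \sgn(w_j)$ coordinatewise, so each coordinate contributes $\abs{w_j}$. Applying this with $\vw = \va + \vF^\top\sigma$ yields precisely $-\vb^\top\sigma + \frac{1}{n}\vnorm{\vF^\top\sigma + \va}_1$, and minimizing over $\sigma \geq \vzero$ reproduces the claimed right-hand side.

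The one genuine obstacle is justifying the interchange of $\max_{\vz}$ and $\min_{\sigma}$, i.e.\ the absence of a duality gap; weak duality alone only gives the inequality $\leq$. The cleanest route is Sion's minimax theorem: the payoff is bilinear (hence concave in $\vz$ and convex in $\sigma$ and continuous), the set $[-1,1]^n$ is compact convex, and $\{\sigma : \sigma \geq \vzero\}$ is convex, which is exactly the configuration Sion's theorem requires to permit the swap while preserving equality. Equivalently, one may cite strong LP duality, observing that the primal feasible region is a nonempty bounded polytope — bounded because it lies inside $[-1,1]^n$, and nonempty because the true labeling defining $\vb$ satisfies the constraints by construction. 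Once the swap is certified, the algebraic steps (the grouping and the $\ell_1$ identity) are entirely routine, so the whole weight of the argument sits on this compactness/feasibility justification for strong duality.
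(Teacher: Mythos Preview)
Your proposal is correct and follows essentially the same route as the paper's proof: Lagrangify the constraint $\frac{1}{n}\vF\vz \geq \vb$ with $\sigma \geq \vzero$, swap $\max_{\vz}$ and $\min_{\sigma}$ via a minimax theorem, and then evaluate the inner box-constrained maximum as an $\ell_1$ norm. Your justification of the swap (Sion / strong LP duality with compactness of $[-1,1]^n$) is in fact more carefully spelled out than the paper's, which simply cites a minimax theorem.
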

\begin{proof}[Proof of Lemma \ref{lem:gamegeng}]
We have
\begin{align}
\label{eq:primallagrange}
\displaystyle \max_{\substack{ \vz \in [-1,1]^n , \\ \vF \vz \geq n \vb }} \;\; \frac{1}{n} \vz^\top \va 
&= \frac{1}{n} \max_{\vz \in [-1,1]^n} \; \min_{\sigma \geq 0^p} \;  \lrb{ \vz^\top \va + \sigma^\top (\vF \vz - n \vb) } \\
\label{eq:duallagrange}
&\stackrel{(a)}{=} \frac{1}{n} \min_{\sigma \geq 0^p} \; \max_{\vz \in [-1,1]^n} \; \lrb{ \vz^\top (\va + \vF^\top \sigma) - n \vb^\top \sigma } \\
\label{eq:gameinnerdual}
&= \frac{1}{n} \min_{\sigma \geq 0^p} \; \lrb{ \vnorm{\va + \vF^\top \sigma}_1 - n \vb^\top \sigma } 
= \min_{\sigma \geq 0^p} \left[ - \vb^\top \sigma + \frac{1}{n} \vnorm{\vF^\top \sigma + \va}_1 \right]
\end{align}
where $(a)$ is by the minimax theorem (\cite{CBL06}, Thm. 7.1). 
\end{proof}

\begin{proof}[Proof of Prop. \ref{prop:bijlbdaalpha}]
The result follows from checking the first-order optimality condition 
$$ \pderiv{w (\lambda) - \lambda \alpha}{\lambda} \Big\vert_{\lambda = \lambda^*(\alpha)} = 0$$
\end{proof}

\begin{lem}[\cite{BF16}]
\label{lem:linfinnerdual}
For any $\va \in \RR^n$,
\begin{align*}
\max_{\substack{ \vz \in [-1,1]^n , \\ \vnorm{\frac{1}{n} \vF \vz - \vb}_{\infty} \leq \epsilon }} \;\frac{1}{n} \vz^\top \va 
\;=\; \min_{\sigma \in \RR^p} \left[ - \vb^\top \sigma + \frac{1}{n} \vnorm{\vF^\top \sigma + \va}_1 + \epsilon \vnorm{\sigma}_1 \right]
\end{align*}
\end{lem}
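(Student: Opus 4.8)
The plan is to mirror the Lagrange-duality argument used for Lemma \ref{lem:gamegeng}, the one new ingredient being how to encode the two-sided $L_\infty$ ball and how the resulting dual variables collapse into a single sign-free vector carrying an $\ell_1$ penalty. First I would rewrite the constraint $\vnorm{\frac{1}{n}\vF\vz - \vb}_\infty \leq \epsilon$ as the pair of one-sided constraints $\frac{1}{n}\vF\vz \geq \vb - \epsilon\vone$ and $\frac{1}{n}\vF\vz \leq \vb + \epsilon\vone$, attaching nonnegative multipliers $\mu \geq \vzero^p$ and $\nu \geq \vzero^p$ respectively. Exactly as in \eqref{eq:primallagrange}, the constrained maximum equals the Lagrangian saddle $\max_{\vz \in [-1,1]^n} \min_{\mu,\nu \geq \vzero^p}[\cdots]$, since the inner minimization over $(\mu,\nu)$ returns $\frac{1}{n}\vz^\top\va$ on the feasible set and $-\infty$ off it.

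Next I would apply the minimax theorem (as at step $(a)$ in the proof of Lemma \ref{lem:gamegeng}) to exchange $\max_\vz$ and $\min_{\mu,\nu}$; this is justified because the objective is linear in $\vz$ for fixed $(\mu,\nu)$ and linear in $(\mu,\nu)$ for fixed $\vz$, with the box $[-1,1]^n$ compact and convex. Setting $\sigma := \mu - \nu$ and collecting terms, the objective becomes $\frac{1}{n}\vz^\top(\va + \vF^\top\sigma) - \vb^\top\sigma + \epsilon\,\vone^\top(\mu+\nu)$. The inner maximization over the box then yields the $\ell_1$ norm, $\max_{\vz \in [-1,1]^n}\frac{1}{n}\vz^\top(\va + \vF^\top\sigma) = \frac{1}{n}\vnorm{\vF^\top\sigma + \va}_1$, leaving $\min_{\mu,\nu \geq \vzero^p}\lrb{\frac{1}{n}\vnorm{\vF^\top\sigma + \va}_1 - \vb^\top\sigma + \epsilon\,\vone^\top(\mu+\nu)}$.

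The crux, and the step I expect to require the most care, is this final minimization over two nonnegative multipliers. Since the $\ell_1$ and $-\vb^\top\sigma$ terms depend on $(\mu,\nu)$ only through $\sigma = \mu - \nu$, for each fixed $\sigma$ it remains to minimize the coupling term $\vone^\top(\mu+\nu)$ over $\{\mu,\nu \geq \vzero^p : \mu - \nu = \sigma\}$. This decouples coordinatewise, and minimizing $\mu_i + \nu_i$ subject to $\mu_i - \nu_i = \sigma_i$ with $\mu_i,\nu_i \geq 0$ gives exactly $\abs{\sigma_i}$ (take $\mu_i = [\sigma_i]_+$, $\nu_i = [-\sigma_i]_+$), so the term collapses to $\epsilon\vnorm{\sigma}_1$. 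Crucially, as $(\mu,\nu)$ range over the nonnegative orthant, $\sigma = \mu - \nu$ ranges over all of $\RR^p$, turning the orthant-constrained problem into an unconstrained one. Substituting gives $\min_{\sigma \in \RR^p}\lrb{-\vb^\top\sigma + \frac{1}{n}\vnorm{\vF^\top\sigma + \va}_1 + \epsilon\vnorm{\sigma}_1}$, the claimed identity. This collapse of two positive-orthant multipliers into a single sign-free $\sigma$ with an $\ell_1$ penalty is precisely the mechanism behind the paper's ``$L_\infty$ bounds $\iff$ $L_1$ regularization'' equivalence.
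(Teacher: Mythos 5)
Your proposal is correct and follows essentially the same route as the paper's proof: split the $L_\infty$ ball into two one-sided constraints with nonnegative multipliers, swap $\max_{\vz}$ and the minimization by the minimax theorem, evaluate the inner maximum over the box as an $\ell_1$ norm, and collapse the pair $(\mu,\nu)$ into a single sign-free $\sigma = \mu - \nu$ with penalty $\epsilon\vnorm{\sigma}_1$. Your coordinatewise minimization of $\mu_i + \nu_i$ subject to $\mu_i - \nu_i = \sigma_i$ is just a slightly more explicit phrasing of the paper's observation that one may assume $\min(\xi_j,\lambda_j)=0$ without loss of generality, so the two arguments are the same.
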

\begin{proof}[Proof of Lemma \ref{lem:linfinnerdual}]
\begin{align*}
\max_{\substack{ \vz \in [-1,1]^n , \\ \vnorm{\frac{1}{n} \vF \vz - \vb}_{\infty} \leq \epsilon }} \;\frac{1}{n} \vz^\top \va 
&= \max_{\substack{ \vz \in [-1,1]^n , \\ \frac{1}{n} \vF \vz - \vb \leq \epsilon \vone^n , \\ - \frac{1}{n} \vF \vz + \vb \leq \epsilon \vone^n }} \;\frac{1}{n} \vz^\top \va \\
&= \frac{1}{n} \;\max_{\vz \in [-1,1]^n} \min_{\lambda, \xi \geq 0^p} \lrb{ \vz^\top \va 
+ \lambda^\top (- \vF \vz + n \vb + n \epsilon \vone^n) + \xi^\top (\vF \vz - n \vb + n \epsilon \vone^n) } \\
&= \frac{1}{n} \;\min_{\lambda, \xi \geq 0^p} \max_{\vz \in [-1,1]^n} \lrb{ \vz^\top 
(\va + \vF^\top (\xi - \lambda)) + \lambda^\top ( n \vb + n \epsilon \vone^n) + \xi^\top (- n \vb + n \epsilon \vone^n) } \\
&= \frac{1}{n} \;\min_{\lambda, \xi \geq 0^p} \lrb{  
\vnorm{\va + \vF^\top (\xi - \lambda)}_1 - n \vb^\top (\xi - \lambda) + n \epsilon \vone^{n \top} (\xi + \lambda) }
\end{align*}
where the interchanging of min and max is justified by the minimax theorem (\cite{CBL06}), 
since the objective is linear in each variable and one of the constraint sets is closed. 

Suppose for some $j \in [n]$ that $\xi_j > 0$ and $\lambda_j > 0$. 
Then subtracting $\min(\xi_j, \lambda_j)$ from both does not affect the value $[\xi - \lambda]_{j}$, 
but always decreases $[\xi + \lambda]_{j}$, and therefore always decreases the objective function. 
Therefore, we can w.l.o.g. assume that $\forall j \in [n]: \min(\xi_j, \lambda_j) = 0$. 
Defining $\sigma_j = \xi_j - \lambda_j$ for all $j$ (so that $\xi_j = [\sigma_j]_+$ and $\lambda_j = [\sigma_j]_-$), 
the last equality above becomes
\begin{align*}
\frac{1}{n} \;\min_{\lambda, \xi \geq 0^p} \lrb{  
\vnorm{\va + \vF^\top (\xi - \lambda)}_1 - n \vb^\top (\xi - \lambda) + n \epsilon \vone^{n \top} (\xi + \lambda) }
= \frac{1}{n} \;\min_{\sigma \in \RR^p} \lrb{  
\vnorm{\va + \vF^\top \sigma}_1 - n \vb^\top \sigma + n \epsilon \vnorm{\sigma}_1 }
\end{align*}
\end{proof}


\section{Abstaining with General Losses}
\label{sec:general}

In this section, we show that the entire abstaining formulation set up in this paper also extends to a very general class of loss functions 
introduced in a recent paper (\cite{BF16}), 
which generalizes the semi-supervised BF setup of Sec. \ref{sec:setup} for the 0-1 classification error. 

The following derivations thereby show that such losses admit efficient algorithms to learn abstaining classifiers, 
with similar optimality guarantees to those presented in the main body of this paper. 
Here, we re-analyze the fixed cost model of Section \ref{sec:abstcost}, 
this time allowing the aggregated classifier's prediction performance to be measured by other losses than the 0-1 loss. 
This allows us to minimax optimally handle a variety of binary prediction tasks, 
such as predicting Bayesian-style probabilities (log loss) or coping with asymmetric misclassification costs. 

This section does not contain new algorithmic ideas relative to the rest of the paper, 
only demonstrating that our analysis incorporates abstaining in an extremely flexible and general manner. 
It does not rely crucially on properties of the 0-1 loss (like its linearity over unlabeled data), 
and indeed extends to a vast class of losses. 
All notation defined here holds only for this section, superseding the equivalent notation 
used in the main body of the paper for the 0-1 loss. 

\subsection{Loss Functions}
In this subsection, we briefly note the setup of \cite{BF16}, defining a general class of prediction loss functions; 
see that paper for further discussion. 
On any single test point with randomized binary label $z_j \in [-1, 1]$, 
our expected performance upon predicting $g_j$, with respect to the randomization of $z_j$, 
is now measured by a loss function $\ell (z_j, g_j)$, which generalizes the $0-1$ loss. 
Also, 
$$ \ell (z_j, g_j) = \lrp{\frac{1+z_j}{2}} \ell (1, g_j) + \lrp{\frac{1-z_j}{2}} \ell (-1, g_j) 
:= \lrp{\frac{1+z_j}{2}} \ell_{+} (g_j) + \lrp{\frac{1-z_j}{2}} \ell_{-} (g_j) $$
where we conveniently write $\ell_{+} (g_j) := \ell (1, g_j)$ and $\ell_{-} (g_j) := \ell (-1, g_j)$. 
We call $\ell_{\pm}$ the \emph{partial losses}. 

We make an assumption on $\ell_{+} (\cdot)$ and $\ell_{-} (\cdot)$:

\begin{assumption}
\label{ass:loss}
Over the interval $(-1,1)$, $\ell_{+} (\cdot)$ is decreasing 
and $\ell_{-} (\cdot)$ is increasing, 
and both are twice differentiable.
\end{assumption}
This clearly includes standard convex partial losses.
We view Assumption \ref{ass:loss} as natural, because the loss function intuitively measures discrepancy to the true label $\pm 1$. 

We refer to an ``increasing" continuous function $f$ as being any nondecreasing function, i.e. such that $x > y \implies f(x) \geq f(y)$. 
Any such function has a pseudoinverse $f^{-1} (y) = \min \{ y : f(x) = y \}$.


The predictor's goal is to 
\emph{minimize the worst-case expected loss on the test data} 
(w.r.t. the randomized labeling $\vz$), written as  
$ \ell (\vz, \vg) := \frac{1}{n} \sum_{j=1}^{n} \ell (z_j, g_j) $.
This worst-case goal can be cast as the following minimax game:
\begin{align}
\label{eq:game1predgenloss} 
V_{\pred} &:= \min_{\vg \in [-1,1]^n} \; \max_{\substack{ \vz \in [-1,1]^n , \\ \frac{1}{n} \vF \vz \geq \vb }} \; \ell (\vz, \vg) \\
&= \frac{1}{2} \min_{\vg \in [-1,1]^n} \; \max_{\substack{ \vz \in [-1,1]^n , \\ \frac{1}{n} \vF \vz \geq \vb }} \;\; 
\frac{1}{n} \sum_{j=1}^{n} \lrb{ \ell_{+} (g_j) + \ell_{-} (g_j) + z_j \lrp{ \ell_{+} (g_j) - \ell_{-} (g_j) } }
\end{align}

The learning problem faced by the predictor can be solved, 
finding an optimal strategy $\vg^*$ realizing the minimum in \eqref{eq:game1pred} for general losses. 
This strategy guarantees good worst-case performance on the unlabeled dataset w.r.t. any possible true labeling $\vz$, 
with an upper bound of $\ell (\vz, \vg^*) \leq V$ on the loss. 
$\vg^*$ turns out to depend componentwise on a linear combination of the input hypotheses, through a sigmoid nonlinearity.   

We describe this formally. 
Define the loss-based \textbf{score function} $ \Gamma : [-1,1] \mapsto \RR$ as 
$$ \Gamma (g) := \ell_{-} (g) - \ell_{+} (g) $$ 
(We also write the vector $\Gamma (\vg) $ componentwise with $[\Gamma (\vg)]_j = \Gamma (g_j)$ for convenience, 
so that $\Gamma (\vh_i) \in \RR^n $ and $\Gamma (\vx_j) \in \RR^p $.)
Observe that by our assumptions, $\Gamma (g)$ is increasing on its domain. 

With these in mind, we can set up the solution to the game \eqref{eq:game1predgenloss}.
\begin{defn}
Define the \textbf{potential well}
\begin{align}
\Psi (m) = 
\begin{cases} 
- m + 2 \ell_{-} (-1)  \qquad & \mbox{ \; if \; } m \leq \Gamma (-1) \\ 
\ell_{+} (\Gamma^{-1} (m)) + \ell_{-} (\Gamma^{-1} (m))  \qquad & \mbox{ \; if \; } m \in \lrp{ \Gamma (-1) , \Gamma (1)} \\ 
m + 2 \ell_{+} (1)  & \mbox{ \; if \; } m \geq \Gamma (1)
\end{cases}
\end{align}
\end{defn}

The minimax equilibrium of the game \eqref{eq:game1predgenloss} can now be described.
\begin{thm}[\cite{BF16}]
\label{thm:gamesolngenloss}
With respect to a weight vector $\sigma \geq \vzero^p$ over $\cH$, 
the vector of \textbf{ensemble predictions} is
$\vF^\top \sigma = (\vx_1^\top \sigma, \dots, \vx_n^\top \sigma)$, 
whose elements' magnitudes are the \textbf{margins}. 
The \textbf{prediction slack function} is
\begin{align}
\label{eqn:slack}
\gamma (\sigma, \vb) := \gamma (\sigma) := - \vb^\top \sigma + \frac{1}{n} \sum_{j=1}^n \Psi ( \vx_{j}^\top \sigma )
\end{align}
The minimax value of the game \eqref{eq:game1predgenloss} is 
$$ \min_{\vg \in [-1,1]^n} \; \max_{\substack{ \vz \in [-1,1]^n , \\ \frac{1}{n} \vF \vz \geq \vb }} \; \ell (\vz, \vg) = V_{\pred} 
= \frac{1}{2} \gamma (\sigma^*) = \frac{1}{2} \min_{\sigma \geq \vzero^p} \lrb{ - \vb^\top \sigma + \frac{1}{n} \sum_{j=1}^n \Psi ( \vx_{j}^\top \sigma ) } $$
The minimax optimal predictions are defined as follows:
for all $j \in [n]$,
\begin{align}
\label{eq:gipredformconstr}
g_j^* := g_j (\sigma^*) = 
\begin{cases} 
-1  \qquad & \mbox{ \; if \; } \vx_{j}^\top \sigma^* \leq \Gamma (-1) \\ 
\Gamma^{-1} (\vx_{j}^\top \sigma^*) \qquad & \mbox{ \; if \; } \vx_{j}^\top \sigma^* \in \lrp{ \Gamma (-1) , \Gamma (1)} \\ 
1  & \mbox{ \; if \; } \vx_{j}^\top \sigma^* \geq \Gamma (1)
\end{cases}
\end{align}
\end{thm}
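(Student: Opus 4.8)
The plan is to follow exactly the template of the earlier game solutions (Thm.~\ref{thm:abstcostsoln}, via Lemma~\ref{lem:gamegeng}), exploiting that the objective in \eqref{eq:game1predgenloss} remains \emph{linear in $\vz$} even for general losses. First I would isolate the $\vz$-dependent part: since $\ell(z_j,g_j) = \frac{1}{2}\lrb{\ell_{+}(g_j)+\ell_{-}(g_j) + z_j(\ell_{+}(g_j)-\ell_{-}(g_j))}$ and $\ell_{+}(g_j)-\ell_{-}(g_j) = -\Gamma(g_j)$, the inner maximization over $\vz$ splits into the $\vz$-free term $\frac{1}{2n}\sum_j(\ell_{+}(g_j)+\ell_{-}(g_j))$ plus $\frac{1}{2}\max_{\vz}\lrb{\frac{1}{n}\vz^\top(-\Gamma(\vg))}$. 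Applying Lemma~\ref{lem:gamegeng} with $\va = -\Gamma(\vg)$ converts this maximization into $\frac{1}{2}\min_{\sigma\geq\vzero^p}\lrb{-\vb^\top\sigma + \frac{1}{n}\vnorm{\vF^\top\sigma-\Gamma(\vg)}_1}$.

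Next, because no maximization remains (unlike the constrained-abstain proof, this needs no minimax theorem), the whole expression is a joint minimization over $\vg$ and $\sigma$, so I would simply interchange the two minimizations and push $\min_\vg$ inside the sum. This decouples the problem coordinatewise, reducing $V_{\pred}$ to $\frac{1}{2}\min_{\sigma\geq\vzero^p}\lrb{-\vb^\top\sigma + \frac{1}{n}\sum_j \min_{g_j\in[-1,1]}\phi_j(g_j)}$, where $\phi_j(g) := \ell_{+}(g)+\ell_{-}(g)+\abs{\vx_j^\top\sigma-\Gamma(g)}$.

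The crux, and the step I expect to require the most care, is the per-coordinate minimization of $\phi_j$, which is where Assumption~\ref{ass:loss} does all the work and where the three branches of $\Psi$ emerge. Writing $m := \vx_j^\top\sigma$ and using that $\Gamma$ is increasing with range $\lrp{\Gamma(-1),\Gamma(1)}$, I would split on the sign of $m-\Gamma(g)$. When $m\leq\Gamma(-1)$ we have $\Gamma(g)\geq m$ for all feasible $g$, so $\phi_j(g)=2\ell_{-}(g)-m$, which (as $\ell_{-}$ increases) is minimized at $g=-1$, giving $\Psi(m)=-m+2\ell_{-}(-1)$. Symmetrically, when $m\geq\Gamma(1)$ we get $\phi_j(g)=2\ell_{+}(g)+m$, minimized at $g=1$ since $\ell_{+}$ decreases, giving $m+2\ell_{+}(1)$. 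In the middle regime $m\in\lrp{\Gamma(-1),\Gamma(1)}$, the monotonicity shows $\phi_j$ is decreasing for $g<\Gamma^{-1}(m)$ (there it equals $2\ell_{+}(g)+m$) and increasing for $g>\Gamma^{-1}(m)$ (there it equals $2\ell_{-}(g)-m$), so the unique minimizer is $g^*=\Gamma^{-1}(m)$, where the absolute value vanishes and $\phi_j=\ell_{+}(\Gamma^{-1}(m))+\ell_{-}(\Gamma^{-1}(m))$. These three cases reproduce exactly the definition of $\Psi(m)$ and the piecewise form \eqref{eq:gipredformconstr} of $g_j^*=g_j(\sigma^*)$. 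Substituting back yields $V_{\pred}=\frac{1}{2}\gamma(\sigma^*)=\frac{1}{2}\min_{\sigma\geq\vzero^p}\gamma(\sigma)$, completing the proof; continuity of $\phi_j$ (from the twice-differentiability in Assumption~\ref{ass:loss}) guarantees that these one-sided analyses patch together into a genuine global minimum.
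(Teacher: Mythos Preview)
Your proposal is correct and follows exactly the template the paper uses. The paper itself does not prove Thm.~\ref{thm:gamesolngenloss} (it is cited from \cite{BF16}), but the identical argument appears as the $p_j \equiv 1$ specialization of the paper's proof of Thm.~\ref{thm:abstcostsolngenloss}: apply Lemma~\ref{lem:gamegeng} to the $\vz$-linear inner maximization, swap the two minimizations, decouple over $j$, and then do precisely your three-case analysis of $\ell_{+}(g)+\ell_{-}(g)+\abs{\vx_j^\top\sigma - \Gamma(g)}$ (the paper's Eq.~\eqref{eq:costacvxmmand} with $p_j=1$), which the paper explicitly defers to ``the proof of Theorem~\ref{thm:gamesolngen} in \cite{BF16}.''
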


This has a similar structure to the 0-1 case, but using the loss-dependent $\Gamma$ function. 
The function $\Gamma$ plays the same role as a ``link function" used in Generalized Linear Models (GLMs) (\cite{MN89}), 
as well as of an activation function for an artificial neuron in neural network applications.

Just as for the 0-1 loss, learning can be done efficiently, with guarantees that follow directly from optimization guarantees, 
because the potential well $\Psi$ (and therefore slack function) is convex under fairly general conditions. 

\begin{lem}[\cite{BF16}]
\label{lem:cvxpotentialgenloss}
The potential well $\Psi (m)$ is continuous and 1-Lipschitz. 
It is also convex under \emph{any} of the following conditions:
\begin{enumerate}
\item
The partial losses $\ell_{\pm} (\cdot)$ are convex over $(-1,1)$.
\item
The loss function $\ell (\cdot, \cdot)$ is a proper loss (\cite{RW10}). 
\item
$\ell_{-}' (x) \ell_{+}'' (x) \geq \ell_{-}'' (x) \ell_{+}' (x)$ for all $x \in (-1,1)$.
\end{enumerate}
\end{lem}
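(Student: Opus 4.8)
The plan is to verify the three listed claims---continuity, the $1$-Lipschitz property, and convexity under each condition---in order, treating $\Psi$ as a continuous piecewise function whose only nontrivial piece is the middle one. Throughout I would parametrize the middle regime by $g = \Gamma^{-1}(m) \in (-1,1)$, so that $m = \Gamma(g) = \ell_{-}(g) - \ell_{+}(g)$ and the middle branch reads $\Psi(m) = \ell_{+}(g) + \ell_{-}(g)$. Recall from Assumption \ref{ass:loss} that $\ell_{+}' \leq 0 \leq \ell_{-}'$, so $\Gamma' = \ell_{-}' - \ell_{+}' \geq 0$ and $g$ increases with $m$; this sign pattern drives everything.

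For continuity, I would evaluate both branches at the two breakpoints $m = \Gamma(-1)$ and $m = \Gamma(1)$: at the left breakpoint the linear piece gives $-\Gamma(-1) + 2\ell_{-}(-1) = \ell_{+}(-1) + \ell_{-}(-1)$, which agrees with the middle piece since $\Gamma^{-1}(\Gamma(-1)) = -1$, and symmetrically at the right breakpoint both give $\ell_{+}(1) + \ell_{-}(1)$. For the Lipschitz bound, the outer pieces have slope $\pm 1$, and on the middle piece the chain rule with $dg/dm = 1/\Gamma'(g)$ yields
$$ \Psi'(m) = \frac{\ell_{+}'(g) + \ell_{-}'(g)}{\ell_{-}'(g) - \ell_{+}'(g)}. $$
Writing $a = \ell_{-}'(g) \geq 0$ and $b = -\ell_{+}'(g) \geq 0$, this is $\frac{a - b}{a + b}$, whose absolute value is at most $1$; hence $\Psi$ is $1$-Lipschitz.

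Convexity is the crux, and I would reduce it to two facts: that $\Psi'$ is nondecreasing within the middle piece, and that the one-sided slopes are consistent at the two kinks. The kink consistency is automatic from Assumption \ref{ass:loss} alone, since the slope jumps are $\frac{a-b}{a+b} - (-1) = \frac{2a}{a+b} \geq 0$ at the left breakpoint and $1 - \frac{a-b}{a+b} = \frac{2b}{a+b} \geq 0$ at the right. For the interior, I would differentiate the slope in $g$ (legitimate since $g$ increases with $m$), obtaining
$$ \frac{d}{dg}\,\Psi'(m) = \frac{2\lrp{\ell_{+}''(g)\,\ell_{-}'(g) - \ell_{+}'(g)\,\ell_{-}''(g)}}{\lrp{\ell_{-}'(g) - \ell_{+}'(g)}^2}. $$
This is nonnegative exactly under condition 3, giving convexity immediately in that case. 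Condition 1 reduces to condition 3 because convex partial losses give $\ell_{\pm}'' \geq 0$, whence $\ell_{-}'\ell_{+}'' \geq 0 \geq \ell_{-}''\ell_{+}'$ using the sign pattern above. For condition 2 I would invoke the characterization of proper losses from \cite{RW10}, which pins the ratio $\ell_{+}'/\ell_{-}'$ to a fixed increasing function of the prediction; since condition 3 is equivalent to $\frac{d}{dg}\lrp{\ell_{+}'/\ell_{-}'} \geq 0$, this closes the case.

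The main obstacle is the interior convexity computation: correctly differentiating the implicitly defined slope through $g = \Gamma^{-1}(m)$ and simplifying the numerator so that the determinant expression of condition 3 falls out cleanly. The only other delicate point is translating the proper-loss hypothesis into that inequality, which requires the precise first-order properness identity relating $\ell_{+}'$ and $\ell_{-}'$ rather than any merely qualitative convexity.
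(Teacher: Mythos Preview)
The paper does not actually prove this lemma; it is quoted from \cite{BF16} and used as a black box, so there is no in-paper argument to compare against. Your proposal supplies a complete proof where the paper has none.

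Your argument is correct. The continuity check at the two breakpoints is straightforward, and the slope formula $\Psi'(m) = (\ell_{+}'+\ell_{-}')/(\ell_{-}'-\ell_{+}')$ on the middle branch immediately gives the $1$-Lipschitz bound via the $(a-b)/(a+b)$ observation. Your computation of $\frac{d}{dg}\Psi'(m)$ is right: the quotient-rule numerator collapses to $2(\ell_{+}''\ell_{-}' - \ell_{+}'\ell_{-}'')$, so condition~3 is exactly nonnegativity of this quantity, and condition~1 reduces to it via the sign pattern $\ell_{+}'\leq 0\leq \ell_{-}'$ together with $\ell_{\pm}''\geq 0$. The kink checks are also correct and, as you note, require only Assumption~\ref{ass:loss}.

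The only place that deserves one more line is condition~2. Your reduction is right in spirit, but ``pins the ratio $\ell_{+}'/\ell_{-}'$ to a fixed increasing function'' should be made explicit: properness means that for each $g$ the first-order condition $p\,\ell_{+}'(g) + (1-p)\,\ell_{-}'(g)=0$ is solved by the probability $p(g)$ that $g$ represents, so $-\ell_{+}'(g)/\ell_{-}'(g) = (1-p(g))/p(g)$; since $p(g)$ is increasing in $g$, the right side is decreasing, hence $\ell_{+}'/\ell_{-}'$ is increasing, which is exactly condition~3 (after dividing by $(\ell_{-}')^2>0$). With that sentence added, the proof is complete.
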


These conditions encompass all losses commonly used in efficient ERM-based learning. 
We assume that one of these conditions hold and \emph{$\Psi$ is convex} hereafter in this section. 
We also assume $\Psi$ to be differentiable, purely for convenience in stating our results; 
the example of the 0-1 loss shows that a nondifferentiable $\Psi$ is no obstacle to analysis. 

Using these assumptions, we now turn to analyze the fixed-cost model of abstaining, 
for losses that satisfy any of the conditions in Lemma \ref{lem:cvxpotentialgenloss}.


\subsection{Predicting and Abstaining with Specified Cost}

We combine the analysis techniques of Section \ref{sec:abstcost} with the setup of partial losses $\ell_{\pm}$ in the previous subsection, 
seeking to minimize the worst-case expected abstaining loss on the test data:
\begin{align}
\label{eq:origgame}
V_{c} 
&= \min_{\vg \in [-1,1]^n} \; \min_{\substack{ \vp \in [0,1]^n }} \; \max_{\substack{ \vz \in [-1,1]^n , \\ \frac{1}{n} \vF \vz \geq \vb }} \;\; 
\frac{1}{n} \sum_{j=1}^{n} \lrb{ p_j \lrb{ \lrp{\frac{1+z_j}{2}} \ell_{+} (g_j) + \lrp{\frac{1-z_j}{2}} \ell_{-} (g_j) } + (1 - p_j) c } \\
\label{eq:abstlnrinz}
&= \frac{1}{2} \min_{\vg \in [-1,1]^n} \; \min_{\substack{ \vp \in [0,1]^n }} \; \max_{\substack{ \vz \in [-1,1]^n , \\ \frac{1}{n} \vF \vz \geq \vb }} \;\; 
\frac{1}{n} \sum_{j=1}^{n} \lrp{ p_j \lrb{ \ell_{+} (g_j) + \ell_{-} (g_j) + z_j \lrp{ \ell_{+} (g_j) - \ell_{-} (g_j) } } + 2 (1 - p_j) c}
\end{align}

The goal is now to find the optimal $\vg_c^* , \vp_c^*$ which minimize the above. 
In doing this, the following function and its derivatives play a major role.
\begin{lem}
\label{lem:defofallpotsgenloss}
Define the function 
\begin{align*}
G (p_j, \vx_j, \sigma) = p_j \Psi \lrp{\frac{\vx_j^\top \sigma}{p_j}}
\end{align*}
Then $ G (0, \vx_j, \sigma) = \lim_{p_j \to 0^+} p_j \Psi \lrp{\frac{\vx_j^\top \sigma}{p_j}} = \abs{\vx_j^\top \sigma}$. 
Also, $G$ is convex in $p_j$, so that
\begin{align*}
K_{j, \sigma} (p_j) := \pderiv{G (p_j, \vx_j, \sigma)}{p_j} = \Psi \lrp{\frac{\vx_j^\top \sigma}{p_j}} - \frac{\vx_j^\top \sigma}{p_j} \Psi' \lrp{\frac{\vx_j^\top \sigma}{p_j}}
\end{align*}
is increasing in $p_j$ for $p_j \geq 0$. 
\end{lem}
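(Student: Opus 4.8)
The plan is to recognize $G(p_j, \vx_j, \sigma) = p_j \Psi\lrp{\vx_j^\top\sigma / p_j}$ as the \emph{perspective} of the potential well $\Psi$, taken in the variable $p_j$ with the margin $\vx_j^\top\sigma$ held fixed as the numerator. Throughout I would abbreviate $m := \vx_j^\top\sigma$ and treat $G$ as a univariate map $G(t) = t\,\Psi(m/t)$ of $t = p_j > 0$; the three assertions then follow from standard facts about perspectives together with a short asymptotic argument at the boundary $t \to 0^+$. Since the paper has assumed $\Psi$ convex (Lemma \ref{lem:cvxpotentialgenloss}) and differentiable, all the manipulations below are licensed.

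First I would settle the boundary limit. With $m$ fixed, as $t \to 0^+$ the argument $m/t$ tends to $+\infty$ when $m > 0$, to $-\infty$ when $m < 0$, and stays at $0$ when $m = 0$. In the first two cases, for $t$ small enough the argument leaves the interval $\lrp{\Gamma(-1),\Gamma(1)}$ and lands in an outer region where $\Psi$ is explicitly linear with unit slope: $\Psi(m') = m' + 2\ell_+(1)$ for $m' \geq \Gamma(1)$, and $\Psi(m') = -m' + 2\ell_-(-1)$ for $m' \leq \Gamma(-1)$. Substituting $m' = m/t$ gives $G(t) = m + 2t\,\ell_+(1) \to m$ when $m > 0$, and $G(t) = -m + 2t\,\ell_-(-1) \to -m$ when $m < 0$; when $m = 0$, $G(t) = t\,\Psi(0) \to 0$. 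In every case the limit equals $\abs{\vx_j^\top\sigma}$, which establishes the claimed value of $G(0,\vx_j,\sigma)$ and continuity of $G$ at $p_j = 0$.

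For convexity I would appeal to the standard fact that the perspective $t \mapsto t\,\Psi(m/t)$ of a convex function $\Psi$ is convex on $t > 0$; no second-order smoothness of $\Psi$ is required, so this is robust even at the kinks of $\Psi$ at $\Gamma(\pm 1)$. Where $\Psi$ is twice differentiable this is confirmed by the direct computation $G''(t) = \frac{m^2}{t^3}\,\Psi''(m/t) \geq 0$. The first derivative, by the chain rule, is $G'(t) = \Psi(m/t) - (m/t)\,\Psi'(m/t)$, which upon restoring $m = \vx_j^\top\sigma$ and $t = p_j$ is exactly the stated $K_{j,\sigma}(p_j)$. Monotonicity of $K_{j,\sigma}$ is then immediate: $K_{j,\sigma} = G'$ is the derivative of a convex function, hence nondecreasing on $p_j > 0$, with the endpoint $p_j = 0$ absorbed by the continuity already shown.

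The only genuinely delicate point is the $t \to 0^+$ limit, which forces me to use the explicit unit-slope behavior of $\Psi$ at its extremes rather than generic convexity or the $1$-Lipschitz bound alone; those would pin the limit to $\abs{m}$ only up to a bounded additive error, not the exact constant $\abs{\vx_j^\top\sigma}$ needed here. Everything else — the perspective/convexity step and the two differentiations — is mechanical, and the standing differentiability assumption on $\Psi$ lets me carry out the derivative computation without having to track nonsmooth points.
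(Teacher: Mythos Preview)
Your proof is correct and follows essentially the same route as the paper's: both obtain the $p_j \to 0^+$ limit from the unit-slope linear form of $\Psi$ outside $[\Gamma(-1),\Gamma(1)]$, verify convexity via $G''(t) = (m^2/t^3)\Psi''(m/t) \geq 0$, and deduce monotonicity of $K_{j,\sigma}$ as the derivative of a convex function. Your framing of $G$ as the perspective of $\Psi$ is a clean addition that secures convexity even at nonsmooth points of $\Psi$, but the underlying argument matches the paper's.
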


\begin{proof}[Proof of Lemma \ref{lem:defofallpotsgenloss}]
From the definition of $\Psi$, we have 
$\displaystyle \lim_{p \to 0^+} \frac{\Psi \lrp{\frac{\vx_j^\top \sigma}{p}}}{\frac{\vx_j^\top \sigma}{p}} = \sgn \lrp{\vx_j^\top \sigma}$,
so $\displaystyle \lim_{p \to 0^+} \frac{K (p, \vx_j, \sigma)}{\frac{\vx_j^\top \sigma}{p}} = \sgn \lrp{\vx_j^\top \sigma} - \lim_{p \to 0^+} \Psi' \lrp{\frac{\vx_j^\top \sigma}{p}} = 0$, 
and so $K (0, \vx_j, \sigma) = 0$. 
Moreover, $\pderiv{K (p_j, \vx_j, \sigma)}{p_j} = \frac{(\vx_j^\top \sigma)^2}{p_j^3} \Psi'' \lrp{\frac{\vx_j^\top \sigma}{p_j}} \geq 0$ because $\Psi$ is convex, 
which makes $G$ convex in $p_j$. 
\end{proof}

Since this function is increasing in $p_j$, we can deal with its inverse $K_{j, \sigma}^{-1} (\lambda)$, also an increasing function.

A few other definitions are necessary to set up our main result. 
Define the function 
\begin{align*}
\phi (x) = \Psi (x) - x \Psi' (x)
\end{align*}
Now $\phi (x)$ is nonnegative, because $\phi \lrp{\frac{\vx_j^\top \sigma}{p_j} } = K_{j, \sigma} (p_j)$, which a straightforward calculation verifies is always nonnegative for all $j$. 
Also, $\phi' (x) = - x \Psi'' (x)$, so $\phi (x)$ is decreasing for $x \geq 0$ and increasing for $x < 0$. 

Therefore, we can define $\phi_{+}^{-1}$ to be the inverse of the mapping $\{ (x, \phi(x)) : x \geq 0 \}$, 
and $\phi_{-}^{-1}$ to be the inverse of the mapping $\{ (x, \phi(x)) : x < 0 \}$, 
and $\phi_{+}^{-1}$ is decreasing and $\phi_{-}^{-1}$ is increasing. 
So if $\phi (x) = \lambda$, then $x = \begin{cases} \phi_{+}^{-1} (\lambda) & , x \geq 0 \\ \phi_{-}^{-1} (\lambda) & , x < 0 \end{cases} $.

\begin{defn}
Define the \textbf{abstaining potential well}:
\begin{align}
\label{eq:defofqj}
Q_j (\sigma, \lambda) = 
\begin{cases}
(\vx_j^\top \sigma) \Psi' \lrp{ \phi_{\sgn(\vx_j^\top \sigma)}^{-1} (\lambda) }  \quad & \qquad \lambda \leq K_{j, \sigma} (1) \\
\Psi \lrp{\vx_j^\top \sigma} - \lambda \quad & \qquad \lambda > K_{j, \sigma} (1)
\end{cases}
\end{align}
\end{defn}
$Q_j$ is clearly convex in $\vx_j^\top \sigma$, and therefore convex in $\sigma$. 
Finally, we can compute that for any positive $\lambda_0$, 
\begin{align}
\label{eq:derivofq}
\pderiv{Q_j (\sigma, \lambda)}{\lambda} \Big\vert_{\lambda = \lambda_0} = 
- p_j^* (\sigma, \lambda_0) = 
\begin{cases}
- K_{j, \sigma}^{-1} (\lambda_0) \quad & \qquad \lambda_0 \leq K_{j, \sigma} (1) \\
-1 \quad & \qquad \lambda_0 > K_{j, \sigma} (1)
\end{cases}
\end{align}
so since $K_{j, \sigma}^{-1}$ is increasing, $Q_j$ is concave in $\lambda$.

\begin{thm}
\label{thm:abstcostsolngenloss}
The minimax value of the game \eqref{eq:origgame} is 
$$ V_{c} = \frac{1}{2} \min_{\sigma \geq \vzero^p} \lrb{ - \vb^\top \sigma + \frac{1}{n} \sum_{j=1}^{n} Q_j (\sigma, 2 c) } + c $$
If we define $\sigma_{c}^*$ to be the minimizing weight vector in this optimization, 
the minimax optimal predictions $\vg_c^*$ and prediction probabilities $\vp_c^*$ are defined as follows for each example $j \in [n]$ in the test set. 
\begin{align}
p_{c,j}^* = \min \lrp{ 1, K_{j, \sigma_{c}^*}^{-1} (2c) }
\end{align}
\begin{align}
\label{eq:gipredformconstr}
g_{c,j}^* = 
\begin{cases} 
-1  \qquad & \mbox{ \; if \; } \vx_{j}^\top \sigma_{c}^* \leq p_{c,j}^* \Gamma (-1) \\ 
\Gamma^{-1} \lrp{\frac{\vx_j^\top \sigma_{c}^*}{p_{c,j}^*}} \qquad & \mbox{ \; if \; } \vx_{j}^\top \sigma_{c}^* \in p_{c,j}^* \lrp{ \Gamma (-1) , \Gamma (1)} \\ 
1  & \mbox{ \; if \; } \vx_{j}^\top \sigma_{c}^* \geq p_{c,j}^* \Gamma (1)
\end{cases}
\end{align}
\end{thm}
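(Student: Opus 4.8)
The plan is to follow the template of the proof of Thm.~\ref{thm:abstcostsoln} for the $0$--$1$ loss, substituting the general potential $\Psi$ and the machinery of Lemma~\ref{lem:defofallpotsgenloss} for the piecewise-linear quantities there. Starting from \eqref{eq:abstlnrinz}, the objective is affine in $\vz$, the coefficient of $z_j$ being $p_j(\ell_{+}(g_j) - \ell_{-}(g_j)) = -p_j \Gamma(g_j)$. I would therefore apply Lemma~\ref{lem:gamegeng} with $\va = -(\vp \circ \Gamma(\vg))$ to rewrite the inner $\max_{\vz}$ as $\min_{\sigma \geq \vzero^p} \lrb{ -\vb^\top\sigma + \frac{1}{n}\vnorm{\vF^\top\sigma - (\vp\circ\Gamma(\vg))}_1 }$. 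Because there is no abstain-rate constraint here (unlike in Thm.~\ref{thm:gamesolnabstzo}), every remaining operation over $\sigma,\vp,\vg$ is a minimization, so no minimax theorem is needed: I simply pull $\min_{\sigma}$ to the outside, whereupon the objective decouples coordinatewise into a sum of per-example minimizations over $(g_j,p_j)$.

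The first inner minimization is over $g_j \in [-1,1]$ with $p_j$ fixed. Writing $s_j := \vx_j^\top\sigma$ and factoring out $p_j > 0$, this is $p_j \min_{g_j}\lrb{ \ell_{+}(g_j) + \ell_{-}(g_j) + \abs{s_j/p_j - \Gamma(g_j)} }$, and the bracketed quantity is exactly the minimization defining $\Psi$ in Thm.~\ref{thm:gamesolngenloss}: splitting on whether $s_j/p_j$ lies below $\Gamma(-1)$, inside $\lrp{\Gamma(-1),\Gamma(1)}$, or above $\Gamma(1)$, and using that $\ell_{+}$ is decreasing and $\ell_{-}$ increasing, the minimum equals $\Psi(s_j/p_j)$. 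Thus the $g_j$-minimum is $p_j\Psi(s_j/p_j) = G(p_j,\vx_j,\sigma)$, and its minimizer is the same three-branch link map as in Thm.~\ref{thm:gamesolngenloss} evaluated at argument $s_j/p_j$; clearing the positive factor $p_j$ from each case boundary yields precisely the claimed $g_{c,j}^*$.

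It remains to minimize $2(1-p_j)c + G(p_j,\vx_j,\sigma)$ over $p_j \in [0,1]$. By Lemma~\ref{lem:defofallpotsgenloss} this is convex in $p_j$ with derivative $K_{j,\sigma}(p_j) - 2c$; since $K_{j,\sigma}(0) = 0 < 2c$ and $K_{j,\sigma}$ is increasing, the minimizer is $p_{c,j}^* = \min\lrp{1, K_{j,\sigma}^{-1}(2c)}$ (never $p_j=0$), matching the statement. Substituting $p_{c,j}^*$ recovers $Q_j(\sigma,2c)$ of \eqref{eq:defofqj}: in the boundary case $K_{j,\sigma}(1) < 2c$ one gets $G(1) - 2c = \Psi(s_j) - 2c$ directly; in the interior case, writing $x^* = s_j/p_{c,j}^*$ and using $K_{j,\sigma}(p_j) = \phi(s_j/p_j)$ with $\phi(x^*) = 2c$, the identity $\Psi(x^*) - 2c = x^*\Psi'(x^*)$ gives $G(p_{c,j}^*) - 2c\,p_{c,j}^* = s_j\Psi'(x^*) = s_j\Psi'\lrp{\phi_{\sgn(s_j)}^{-1}(2c)}$, the top branch of \eqref{eq:defofqj}. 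Hence $\min_{p_j}\lrb{2(1-p_j)c + G(p_j,\vx_j,\sigma)} = 2c + Q_j(\sigma,2c)$, and after the overall factor $\frac{1}{2}$ the summed constant $2c$ contributes the additive $+c$, giving the stated value of $V_{c}$.

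I expect the main obstacle to be the substitution in the last paragraph that reproduces the closed form $Q_j$, since it hinges on correctly composing the pseudoinverses $K_{j,\sigma}^{-1}$ and $\phi_{\pm}^{-1}$ and on the identity $\phi(x) = \Psi(x) - x\Psi'(x)$; care is also needed at the degenerate score $s_j = 0$ (handled by the same monotonicity argument) and for $c > \frac{1}{2}$, which as in the $0$--$1$ case forces $p_{c,j}^* = 1$ for all $j$ and collapses the problem to the non-abstaining slack of Thm.~\ref{thm:gamesolngenloss}.
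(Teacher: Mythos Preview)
Your proposal is correct and follows essentially the same approach as the paper's proof: dualize the inner $\max_{\vz}$ via Lemma~\ref{lem:gamegeng}, decouple over $j$, reduce the $g_j$-minimization to $p_j\Psi(\vx_j^\top\sigma/p_j)$ as in Thm.~\ref{thm:gamesolngenloss}, and then use the convexity of $G$ from Lemma~\ref{lem:defofallpotsgenloss} to optimize over $p_j$. In fact you carry out the final substitution that identifies $Q_j(\sigma,2c)$ more explicitly than the paper does (the paper simply asserts the result at that step), so your argument is slightly more complete.
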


\begin{proof}[Proof of Theorem \ref{thm:abstcostsolngenloss}]
This proof generalizes the main proof of \cite{BF16} using ideas from the proof of Thm. \ref{thm:abstcostsoln}. 
First, note that $\ell (\vz, \vg)$ is linear in $\vz$, so the constrained maximization over $\vz$ is basically  
\begin{align}
\label{eq:abstinnerdual}
\max_{\substack{ \vz \in [-1,1]^n , \\ \frac{1}{n} \vF \vz \geq \vb }} \;\; \frac{1}{n} \sum_{i=1}^{n} (1 - a_j) z_j \lrp{ \ell_{+} (g_j) - \ell_{-} (g_j) } 
&= \max_{\substack{ \vz \in [-1,1]^n , \\ \frac{1}{n} \vF \vz \geq \vb }} \;\; - \frac{1}{n} \vz^\top [\vp \circ \Gamma (\vg) ] \nonumber \\
&= \min_{\sigma \geq 0^p} \left[ - \vb^\top \sigma + \frac{1}{n} \vnorm{\vF^\top \sigma - (\vp \circ \Gamma (\vg)) }_1 \right]
\end{align}
where the last equality uses Lemma \ref{lem:gamegeng}. 

Substituting \eqref{eq:abstinnerdual} into \eqref{eq:abstlnrinz} and simplifying, 
\begin{align}
V 
&= \frac{1}{2} \min_{\vg \in [-1,1]^n} \min_{\substack{ \vp \in [0,1]^n }} \; \lrb{ \frac{1}{n} \sum_{j=1}^{n} \lrp{ p_j \lrb{ \ell_{+} (g_j) + \ell_{-} (g_j) } + 2 (1-p_j) c } + 
\max_{\substack{ \vz \in [-1,1]^n , \\ \frac{1}{n} \vF \vz \geq \vb }} \;\;  \frac{1}{n} \sum_{j=1}^{n} p_j z_j \lrp{ \ell_{+} (g_j) - \ell_{-} (g_j) } } \nonumber \\
&= \frac{1}{2} \min_{\vg \in [-1,1]^n} \min_{\substack{ \vp \in [0,1]^n }} \; \lrb{ \frac{1}{n} \sum_{j=1}^{n} \lrp{ p_j \lrb{ \ell_{+} (g_j) + \ell_{-} (g_j) } + 2 (1-p_j) c } + 
\min_{\sigma \geq 0^p} \left[ - \vb^\top \sigma + \frac{1}{n} \vnorm{\vF^\top \sigma - (\vp \circ \Gamma (\vg)) }_1 \right] } \nonumber \\
&= \frac{1}{2} \min_{\sigma \geq 0^p} \lrb{ - \vb^\top \sigma + \min_{\substack{ \vp \in [0,1]^n }} \min_{\vg \in [-1,1]^n}
\lrb{ \frac{1}{n} \sum_{j=1}^{n} \lrp{ p_j \lrb{ \ell_{+} (g_j) + \ell_{-} (g_j) } + 2 (1-p_j) c } + \frac{1}{n} \vnorm{\vF^\top \sigma - (\vp \circ \Gamma (\vg))}_1 } } \nonumber \\
\label{eq:costinterabstopt}
&= \frac{1}{2} \min_{\sigma \geq 0^p} \lrb{ - \vb^\top \sigma + \min_{\substack{ \vp \in [0,1]^n }} \frac{1}{n} \sum_{j=1}^{n} \lrb{ 2 (1-p_j) c + 
\min_{g_j \in [-1,1]} \lrb{ p_j (\ell_{+} (g_j) + \ell_{-} (g_j) ) + \left| \vx_j^\top \sigma - p_j \Gamma (g_j) \right| }} }
\end{align}

Now we consider only the innermost minimization over $g_j$ for any $j$. 

If $p_j = 0$ in \eqref{eq:costinterabstopt}, then the inner minimization's objective is simply $\abs{\vx_j^\top \sigma}$. 
Otherwise, we note that the absolute value breaks down into two cases, so the inner minimization's objective can be simplified:
\begin{align}
\label{eq:costacvxmmand}
p_j (\ell_{+} (g_j) + \ell_{-} (g_j)) + \left| \vx_j^\top \sigma - p_j \Gamma (g_j) \right| 
= \begin{cases} 
p_j \lrp{ 2 \ell_{+} (g_j) + \frac{\vx_j^\top \sigma}{p_j}}  \qquad & \mbox{ \; if \; } \frac{\vx_j^\top \sigma}{p_j} \geq \Gamma (g_j) \\ 
p_j \lrp{ 2 \ell_{-} (g_j) - \frac{\vx_j^\top \sigma}{p_j}}  & \mbox{ \; if \; } \frac{\vx_j^\top \sigma}{p_j} < \Gamma (g_j)
\end{cases}
\end{align}

This is nearly the same situation faced in the proof of Theorem \ref{thm:gamesolngen} in \cite{BF16}, except with $\vx_j^\top \sigma$ replaced by $\frac{\vx_j^\top \sigma}{p_j}$. 
So we proceed with that argument, 
finding that: 
\begin{align*}
\min_{g_j \in [-1,1]} \lrb{ p_j (\ell_{+} (g_j) + \ell_{-} (g_j)) + \left| \vx_j^\top \sigma - p_j \Gamma (g_j) \right| }
= p_j \Psi \lrp{\frac{\vx_j^\top \sigma}{p_j}}
\end{align*}
with the minimizing $g_j^*$ being of the same form as Theorem \ref{thm:gamesolngen}, 
but dependent on the quantity $\frac{\vx_j^\top \sigma}{p_j}$ instead of $\vx_j^\top \sigma$. 

So we can rewrite \eqref{eq:costinterabstopt} as 
\begin{align}
V = \mbox{Eq. }\eqref{eq:costinterabstopt}
&= \frac{1}{2} \min_{\sigma \geq 0^p} \lrb{ - \vb^\top \sigma + \frac{1}{n} \sum_{j=1}^{n} \min_{p_j \in [0,1] } \lrb{ 2 (1-p_j) c + 
\min_{g_j \in [-1,1]} \lrb{ p_j (\ell_{+} (g_j) + \ell_{-} (g_j) ) + \left| \vx_j^\top \sigma - p_j \Gamma (g_j) \right| }} } \nonumber \\
&= 
\frac{1}{2} \min_{\sigma \geq 0^p} 
\lrb{ - \vb^\top \sigma + \frac{1}{n} \sum_{j=1}^{n} \min_{p_j \in [0,1] } \lrb{ 2 (1-p_j) c + 
p_j \Psi \lrp{\frac{\vx_j^\top \sigma}{p_j}} } } \nonumber
\end{align}

Now the minimand over $p_j$ is convex in $p_j$, by Lemma \ref{lem:defofallpotsgenloss}.
Using first-order optimality conditions, observe that $p_j^*$ is such that $2 c = K_{j, \sigma} (p_j^*)$, 
if the implied $p_j^*$ is $\leq 1$, and $1$ otherwise. Therefore, 
\begin{align*}
p_j^* = \min \lrp{ K_{j, \sigma}^{-1} (2 c), 1}
\end{align*}
and so we have
\begin{align}
V = \mbox{Eq. }\eqref{eq:costinterabstopt}
&= \frac{1}{2} \min_{\sigma \geq 0^p} \lrb{ - \vb^\top \sigma + \frac{1}{n} \sum_{j=1}^{n} Q_j (\sigma, 2 c) } + c
\end{align}
\end{proof}

\end{document}